\def\s{\mathcal{S}}
\def\t{\mathcal{T}}
\def\N{\mathcal{N}}
\def\zo{0\mathrm{-}1}
\def\yhard{y^\mathrm{hard}}
\def\yconj{y^\mathrm{conj}}
\def\h{\mathrm{hard}}
\def\c{\mathrm{conj}} 
\def\pred{\mathrm{self}}
\def\hard{\mathrm{hard}}
\def\conj{\mathrm{conj}}
\def\sech{\mathrm{sech}}
\def\sq{\mathrm{square}}
\def\logit{\mathrm{logistic}}
\def\lexp{\ell^{\mathrm{exp}}}
\newcommand\myeqref[1]{(\ref{#1})}
\newtheorem{lemma}{Lemma}
\newtheorem{proposition}{Proposition}
\def\eqref#1{equation~\ref{#1}}
\def\1{\bm{1}}
\DeclareMathAlphabet{\mathsfit}{\encodingdefault}{\sfdefault}{m}{sl}
\SetMathAlphabet{\mathsfit}{bold}{\encodingdefault}{\sfdefault}{bx}{n}
\newcommand{\E}{\mathbb{E}}
\newcommand{\R}{\mathbb{R}}
\DeclareMathOperator{\sign}{sign}
\title{Towards Understanding GD with Hard and Conjugate Pseudo-labels for Test-Time Adaptation}
\author{Jun-Kun Wang and Andre Wibisono \text{  } 
\text{  }
\text{  }
\text{  }
\text{  }
\text{  }
\text{  }
\text{  }
\text{  }
\text{  }
\text{  }
\text{  }
\text{  }
\text{  }
\\
Department of Computer Science, Yale University\\
\texttt{\{jun-kun.wang,andre.wibisono\}@yale.edu} 
}
\begin{document}

\maketitle

\begin{abstract}
We consider a setting that a model needs to adapt to a new domain under distribution shifts, given that only unlabeled test samples from the new domain are accessible at test time. A common idea in most of the related works is constructing pseudo-labels for the unlabeled test samples and applying gradient descent (GD) to a loss function with the pseudo-labels. Recently, \cite{GSRK22} propose conjugate labels, which is a new kind of pseudo-labels for self-training at test time. They empirically show that the conjugate label outperforms other ways of pseudo-labeling on many domain adaptation benchmarks. However, provably showing that GD with conjugate labels learns a good classifier for test-time adaptation remains open. In this work, we aim at theoretically understanding GD with hard and conjugate labels for a binary classification problem. We show that for square loss, GD with conjugate labels converges to
an $\epsilon$-optimal predictor
under a Gaussian model for any arbitrarily small $\epsilon$, while GD with hard pseudo-labels fails in this task. We also analyze them under different loss functions for the update. Our results shed lights on understanding when and why GD with hard labels or conjugate labels works in test-time adaptation.
\end{abstract}

\section{Introduction}

Fully test-time adaptation is the task of adapting a model from a source domain so that it fits to a new domain at test time, without accessing the true labels of samples from the new domain
nor the data from the source domain
\citep{GSRK22,wang2020tent,li2020model,rusak2021if,zhang2021memo,fleuret2021test,mummadi2021test,iwasawa2021test,liang2020we,niu2022efficient,thopalli2022geometric,wang2022continual,kurmi2021domain}.
Its setting is different from many works in domain adaptation 
or test-time training, 
where the source data or
statistics of the source data are available, e.g., 
\citet{xie2020n,liu2021cycle,prabhu2021sentry,sun2020test,chen2022contrastive,hoffman2018cycada,eastwood2021source,kundu2020universal,liu2021ttt++,schneider2020improving,GSCE22,zhang2021adaptive,morerio2020generative,SXJ22}.
Test-time adaptation has drawn growing interest recently, thanks to its potential in real-world applications where annotating test data from a new domain is costly and distribution shifts arise at test time due to some natural factors, e.g., sensor degradation \citep{wang2020tent}, evolving road conditions \citep{GKorea22,KML20}, weather conditions \citep{bobu2018adapting},
or change in demographics, users, and time periods \citep{koh2021wilds}.

The central idea in many related works is the construction of the pseudo-labels
or the proposal of the self-training loss functions for the unlabeled samples, see e.g., \citet{wang2020tent,GSRK22}.
More precisely, at each test time $t$, one receives some unlabeled samples from a new domain,
and then one constructs some pseudo-labels and applies a GD step to the corresponding self-training loss function, as summarized in Algorithm~\ref{alg:1}. 
Recently, \citet{GSRK22} propose a new type of pseudo-labels called conjugate labels, which is based on an observation that certain loss functions can be naturally connected to conjugate functions, and the pseudo-labels are obtained by exploiting a property of conjugate functions (to be elaborated soon).
They provide a modular approach of constructing conjugate labels for some loss functions, e.g., square loss, cross-entropy loss, exponential loss. An interesting finding of \citet{GSRK22} is that
a recently proposed self-training loss for test-time adaptation of \citet{wang2020tent} can be recovered from their conjugate-label framework.
They also show that GD with conjugate labels empirically outperforms that of other pseudo-labels 
like hard labels and robust pseudo-labels \citep{rusak2021if}
across many benchmarks, e.g., ImageNet-C \citep{hendrycks2019benchmarking},
ImageNet-R \citep{hendrycks2021many},
VISDA-C \citep{peng2017visda},
MNISTM \citep{ganin2015unsupervised}. 
However, certain questions are left open in their work. For example, why does GD with conjugate labels work? Why can it dominate GD with other pseudo-labels?
To our knowledge, while pseudo-labels are quite indispensable for self-training in the literature
\citep{li2019learning,zou2019confidence},
works that theoretically understand the dynamic of GD with pseudo-labels are very sparse, and the only work that we are aware is of \cite{chen2020self}.
\cite{chen2020self} show that when data have spurious features, if \emph{projected} GD is initialized with sufficiently high accuracy in a new domain, then by minimizing the exponential loss with hard labels, projected GD converges to an approximately Bayes-optimal solution under certain conditions.
In this work, we study vanilla GD (without projection) for minimizing the self-training loss derived from square loss, logistic loss, and exponential loss
under hard labels and conjugate labels. 

We prove a performance gap between GD with conjugate labels and GD with hard labels under a simple Gaussian model \citep{SSTTM18,CRSDL19}. Specifically, we show that GD with hard labels for minimizing square loss can not converge to an 
$\epsilon$-optimal predictor (see \myeqref{eq:dot} for the definition) for any arbitrarily small $\epsilon$, while GD with conjugate labels converge to an $\epsilon$-optimal predictor exponentially fast. Our theoretical result champions the work of conjugate labels of \cite{GSRK22}.
We then analyze GD with hard and conjugate labels under logistic loss and exponential loss, and we show that under these scenarios, they converge to an optimal solution at a $\log(t)$ rate, where $t$ is the number of test-time iterations. Our results suggest that the performance of GD in test-time adaptation depends crucially on the choice of pseudo-labels and loss functions.
Interestingly, 
the problems of minimizing the associated self-training losses of conjugate labels in this work are non-convex optimization problems.
Hence, our theoretical results find an application in non-convex optimization where GD can enjoy some provable guarantees.

\begin{algorithm}[t]
\begin{algorithmic}[1]
\footnotesize
\caption{Test-time adaptation via pseudo-labeling } \label{alg:1}{}
\STATE \textbf{Init:} $w_{1}=w_{{\s}}$, where $w_{{\s}}$ is the model learned from a source domain.
\STATE \textbf{Given:} Access to samples from the data distribution $D_{\text{test}}$ of a new domain.
\FOR{$t=1,2,\dots, T$}
\STATE Get a sample $x_{t} \sim D_{\text{test}}$ from the new domain.
\STATE Construct a pseudo-label $y^{\mathrm{pseudo}}_{w_t}(x_t)$ and consequently 
a self-training loss function $\ell^{\pred}(w_t;x_t)$.
\STATE Apply gradient descent (GD):
$w_{t+1} = w_t - \eta \nabla_w \ell^{\pred}(w_t;x_t).$
\ENDFOR
\end{algorithmic}
\end{algorithm}

\section{Preliminaries}

We now give an overview of hard labels and conjugate labels. 
But we note that there are other proposals of pseudo-labels in the literature.
We refer the reader to \citet{li2019learning,zou2019confidence,rusak2021if} and the references therein for details.

\noindent
\textbf{Hard labels:}
Suppose that a model $w$ outputs $h_{w}(x) \in \R^{K}$ and that each element of $h_{w}(x)$ could be viewed as the predicted score of each class for a multi-class classification problem with $K$ classes.
A hard pseudo-label $\yhard_{w}(x)$ is a one-hot vector which is $1$ on dimension $k$ (and $0$ elsewhere) if $k = \arg\max_{k} h_w(x)[k]$, i.e., class $k$ has the largest predicted score by the model $w$
for a sample $x$ \citep{GSRK22}.
On the other hand, for a binary classification problem by a linear predictor, i.e., 
$h_w(x) = w^\top x$, a hard pseudo-label is simply defined as:
\begin{equation} \label{def:yhard}
\yhard_{w}(x) := \sign( w^\top x), 
\end{equation}
see, e.g., \cite{KML20}, \cite{chen2020self}.
GD with hard labels is the case when Algorithm~\ref{alg:1} uses a hard label to construct
a gradient $\nabla_w \ell^{\pred}(w_t;x_t) $ and update the model $w$.

\noindent
\textbf{Conjugate labels  \citep{GSRK22}:}
The approach of using conjugate labels as pseudo-labels crucially relies on the assumption that the original loss function is of the following form:
\begin{equation} \label{def:ori}
\ell(w;(y,x)) := f( h_w(x) ) - y^\top h_w(x),
\end{equation}
where $f(\cdot): \R^{K} \rightarrow \R$ is a scalar-value function,
and $y \in \R^{K}$ is the label of $x$, which
could be a one-hot encoding vector in multi-class classification. 
Since the true label $y$ of a sample $x$ is not available in test-time adaptation,
it is natural to construct a pseudo-label $y^{\mathrm{pseudo}}_{w}(x)$ and consequently a
self-training loss function by replacing $y$ with $y^{\mathrm{pseudo}}_{w}(x)$ in \myeqref{def:ori},
\begin{equation} \label{def:lpred}
\ell^{{\conj}}(w;x):= f( h_w(x) ) - y^{\mathrm{pseudo}}_w(x)^\top h_w(x).
\end{equation}
One can then compute the gradient
$\nabla \ell^{{\conj}}(w;(y,x))$ and use GD to adapt the model $w$ at test time.

Define $h_* \in \R^K$ 
as
$h_* \leftarrow \arg\min_{h \in \R^K} f( h ) - y^\top h,$
where $-f^*(y) = \min_{h \in \R^K} f(h) - y^\top h $ is the conjugate function, see e.g, Chapter~3.3 in \citet{boyd2004convex}.
It turns out that $h_{*}$ satisfies 
$y = \nabla f(h_*)$.
From the similarity, \cite{GSRK22} propose conjugate labels:
\begin{equation} \label{def:yconj}
\yconj_w(x) := \nabla f(h_w(x)),
\end{equation}
where $\yconj_w(x)$ is possibly a real-value vector instead of a one-hot encoding vector.
Let $y^{\mathrm{pseudo}}_w(x) \leftarrow \yconj_w(x)$ in \myeqref{def:lpred}. Then, we get
the self-training loss function using the conjugate label: 
\begin{equation} \label{def:lconj}
\ell^{{\conj}}(w;x):= f( h_w(x) ) - \nabla f(h_w(x))^\top h_w(x).
\end{equation}
We note that GD with conjugate labels is an instance of Algorithm~\ref{alg:1} when we let  
$\nabla_w \ell^{\pred}(w_t;x_t) \leftarrow \nabla_w \ell^{{\conj}}(w_t;x_t)$ at each test time $t$.

\begin{table}[t]
\footnotesize
\caption{
 Summary of \{Hard, Conjugate\} pseudo-labels and the resulting self-training loss functions using 
square loss, logistic loss, and exponential loss.
 } \label{table1}
\begin{center}
\footnotesize
\begin{tabular}{  l | r  | l} \hline
\multicolumn{3}{l}{
 \textbf{Square loss:}  
 $\ell^{\exp}(w;(x,y)):= \frac{1}{2} ( y - w^\top x)^{2}$. 
 } \\ \hline \hline
Hard & 
$\yhard_w(x) = \sign(w^\top x)$ 
  &
$\ell^{{\h}}(w;x) = \frac{1}{2}( \sign(w^\top x) - w^\top x )^{2}$ \\ 
Conjugate &   $\yconj_w(x) = w^\top x$   & $\ell^{{\c}}(w;x) = -\frac{1}{2} (w^\top x)^2$ \\ \hline \hline
\multicolumn{3}{l}{
 \textbf{Logistic loss:} 
$\ell^{\mathrm{logit}}(w;(x,y)) 
 :=  \log \left(  \cosh\left( w^\top x  \right) \right) - y (w^\top x)
$, where $y = \{ +1, -1 \}$.
} \\ \hline \hline
Hard & $\yhard_w(x) = \sign(w^\top x)$ &
$ \ell^{{\h}}(w;x) = 
\log \left(  \cosh\left( w^\top x  \right) \right) - 
|w^\top x| $ \\
Conjugate &   $\yconj_w(x) = \tanh\left( w^\top x \right)$   & $\ell^{{\c}}(w;x) = 
\log \left(  \cosh\left( w^\top x  \right) \right) - \tanh\left( w^\top x  \right) w^\top x$ \\ \hline \hline
\multicolumn{3}{l}{
 \textbf{Exponential loss:}  
 $\ell^{\exp}(w;(x,y)):= \exp(-y w^\top x)$, where $y=\{+1,-1\}$. 
 } \\ \hline \hline
Hard & 
$\yhard_w(x) = \sign(w^\top x)$ 
  &
$\ell^{{\h}}(w;x) = \exp(-|w^\top x|)$ \\ 
Conjugate &   $\yconj_w(x) = \tanh\left(  w^\top x \right)$   & $\ell^{{\c}}(w;x) = 
 \sech\left( w^\top x  \right)$  \\ \hline
\end{tabular}
\end{center}
\vspace{-2.0pt}
\end{table}

Table~\ref{table1} summarizes conjugate labels and hard labels as well as their self-training loss functions using square loss, logistic loss, and exponential loss.
We provide the derivation of the case using square loss below,  
while the rest of them are available in Appendix~\ref{app:dev}.
\\
\noindent
\textbf{(Square loss) Example of a conjugate label $\yconj_w(x)$ and its self-training function $\ell^{{\conj}}(w;x)$:} \\
Observe that square loss $\ell(w;(x,y)):= \frac{1}{2}( y - w^\top x)^2$ is in the form of \myeqref{def:ori} up to a constant,
where $f(\cdot) = \frac{1}{2} (\cdot)^{2} : \R \rightarrow \R^{+}$. 
Substituting $f(\cdot)= \frac{1}{2} (\cdot)^{2}$ and $h(w) = w^{\top} x$ in \myeqref{def:yconj} and \myeqref{def:lconj}, we get
\begin{equation} \label{p:sq}
\yconj_w(x) = w^\top x, \quad \text{ and }  \quad
\ell^{{\conj}}(w;x) = - \frac{1}{2} (w^\top x)^2.
\end{equation}


\section{Theoretical framework: Gaussian model} \label{sec3}

Our theoretical analysis considers a binary classification setting in which samples from the new domain are generated as
$x \sim \N( y \mu, \sigma^2 I_d   ) \in \R^d,$
where $\mu \in \R^{d}$ is the mean and
$\sigma^2 >0$ is the magnitude of the covariance.
The label $y$ is assumed to be uniform on $\{-1,1\}$. 
Therefore, we have $P( X | Y = y ) = \N( y \mu, \sigma I_d   )$ and $P(y=-1)=P(y=1)=\frac{1}{2}$ under Gaussian model \citep{SSTTM18,CRSDL19,KML20}.

Given a test sample $x$, a linear predictor $w \in \R^{d}$ makes a prediction of the label $\hat{y}_w(x)$ as
$\hat{y}_w(x) = \mathrm{sign}(w^\top x).$
While a model could be self-trained under various loss functions, the natural metric
to evaluate a model for classification is the expected $0$-$1$ loss.
Under Gaussian model, the expected $0$-$1$ loss enjoys a simple closed-form expression:
\begin{equation} \label{27}
\ell^{\zo}(w) := \E_{(x,y)}[ \mathbbm{1}\{y \hat{y}_w(x) \neq 0\} ] = P[  y w^\top x < 0  ]
=   P\left(  N\left(  \frac{\mu^\top w }{  \sigma \| w\|  } , 1   \right)  < 0   \right)
=   \Phi\left( \frac{\mu^\top w }{  \sigma \| w\|  }  \right),
\end{equation}
where 
$\Phi(u) := \frac{1}{\sqrt{2 \pi}} \int_u^{\infty} \exp( -z^2/2) dz$ is the Gaussian error function.
From \myeqref{27}, one can see that the predictors that minimize the $\zo$ loss 
are those that align with $\mu$ in direction and the minimum error is $\Phi\left( \frac{ \| \mu \|}{  \sigma }  \right)$. In other words,
an optimal linear predictors $w_{*} \in \R^{d}$ has to satisfy $\cos\left(\frac{w_*}{\|w_*\|}, \frac{\mu}{\| \mu\|} \right)=1$. 


In our theoretical analysis, we let $\mu = [ \| \mu\|, 0, \dots, 0]^{\top} \in \R^{d}$; namely, the first element is the only non-zero entry.
Our treatment is without loss of generality, since we can rotate and change a coordinate system if necessary. 
For any vector $w \in \R^{d}$, its orthogonal component to $\mu$ is
$\left(I_d - \frac{\mu}{|\mu|} \frac{\mu^\top}{|\mu|} \right) w $.
Thanks to the assumption of $\mu$, the orthogonal space (to $\mu$) is the subspace of dimension $2$ to $d$. 
Indeed, for any vector $w$, its orthogonal component (to $\mu$) $\left(I_d - \frac{\mu}{|\mu|} \frac{\mu^\top}{|\mu|} \right) w $ is always $0$ in its first entry.
Therefore, we can represent an orthogonal component of $w$ as $[ w[2], \dots, w[d]] \in \R^{d-1}$. 
\begin{mdframed}
We call a model $w \in \R^{d}$ an \textbf{$\epsilon$-optimal predictor} under Gaussian model if it satisfies two conditions:
\begin{equation} \label{eq:dot}
\begin{split}
\textbf{Condition 1:} \quad  \left \langle w , \frac{\mu}{ \| \mu \|} \right \rangle = w[1] > 0
 \quad \text{and} \quad  \textbf{Condition 2:} \quad  \cos^2\left(\frac{w}{\|w\|}, \frac{\mu}{\| \mu\|} \right) \geq 1 -\epsilon.
\end{split}
\end{equation}
\end{mdframed}
Using \myeqref{27}, the expected $\zo$ loss of an $\epsilon$-optimal predictor
is $\ell^{\zo}(w)= \Phi\left( \frac{\|\mu\|}{\sigma} \sqrt{1-\epsilon} \right)$.
To get an $\epsilon$-optimal predictor, 
we need to satisfy $\langle w, \mu \rangle >0$ and also need that
the ratio of 
the projection onto 
$\mu$ to the size of the orthogonal component to $\mu$ is as large as possible, i.e., $\frac{ w[1]^2 }{ \sum_{i\neq 1}^d w^2[i] }$ is large, which can be seen from the following equalities:
$
\cos^2\left(\frac{w}{\|w\|}, \frac{\mu}{\| \mu\|} \right) 
= \frac{  \langle w, \mu \rangle^2   }{ \| w \|^2 \| \mu \|^2 }
= \frac{w[1]^2}{  \sum_{i=1}^d w[i]^2    }
= \frac{1}{  1 +  \frac{ \sum_{i\neq 1}^d w[i]^2 }{ w[1]^2 }  }.
$
The projection of $w$ onto $\mu$ has to be positive and large when the size of the orthogonal component is non-zero to get an $\epsilon$-optimal predictor,
i.e., $w[1] \gg 0$.

Finally, in our analysis we will assume that the initial point satisfies Condition 1 on 
\myeqref{eq:dot},
which means that the initial point forms an acute angle with $\mu$.
This is a mild assumption, as it means that the source model is better than the random guessing in the new domain.  

\noindent
\textbf{Related works of Gaussian model:}
In recent years, there are some works that adopt the framework of Gaussian model to show some provable guarantees under various topics. For example, \citet{SSTTM18} and \citet{CRSDL19} studying it for adversarial robustness. For another example, 
\citet{KML20} recently show that self-training with hard labels
can learn a good classifier when infinite unlabeled data are available and that the distributions shifts are mild.
Their theoretical result perhaps is the most relevant one to ours in the literature, in addition to \cite{chen2020self} that we have discussed in the introduction. 
\cite{KML20} consider the setting of gradual distribution shifts so that the data distribution in each iteration $t$ is different and that the update in each $t$ is a minimizer of a constrained optimization:
\begin{equation} \label{eg:1}
\textstyle
 w_t \leftarrow 
\arg\min_{w \in \Theta}
 \E_{x \sim D_t}\left[ L\left( \yhard_w(x) w^\top x  \right)  \right],
\text{ where } \Theta:= \left \{ w: \|w\|\leq 1, \| w - w_{t-1}\| \leq \frac{1}{2} \right \}.
\end{equation}
On \myeqref{eg:1}, $L(\cdot): \R \rightarrow \R^+$ is a continuous decreasing function,
$D_{t}$ represents the data distribution at $t$, and $\yhard_{w}(x):= \mathrm{sign}(w^\top x)$
is the hard label for an unlabeled sample $x$.
The main message of their result is that even though the data distribution of the target domain could be very different from that of the source domain, by using data from the intermediate distributions that change gradually, a good classifier for the target domain can be obtained in the end.
On the other hand, we consider 
analyzing GD with pseudo-labels at test-time iterations, and
we do not assume that there are intermediate distributions.
Our goal is to provably show that GD with pseudo-labels can learn an optimal classifier in a new domain when only unlabeled samples are available at test time, which is different from the setup of \cite{KML20} that simply assumes the access to a minimizer of a certain objective.

\section{(A negative example) GD with hard labels under square loss}

One of the common loss function is square loss. 
Recent works have shown that even for the task of classification, a model trained under square loss can achieve competitive performance for classification as compared to that of a model trained under certain classification losses like cross-entropy loss \citep{demirkaya2020exploring,han2021neural,hui2020evaluation}.
In this section,
we analyze test-time adaptation by GD with hard pseudo-labels under square loss.  
Recall the definition of square loss: $\ell(w;(x,y)) = \frac{1}{2} (y - w^\top x)^2$.
By using hard labels as \myeqref{def:yhard},
the self-training loss function becomes
\begin{equation} \label{def:lhard}
\ell^{\h}(w;x) := \frac{1}{2} \left( \yhard_w(x)  - w^\top x    \right)^2
= \frac{1}{2} \left( \sign( w^\top x) - w^\top x \right)^2.
\end{equation}
It is noted that the derivative of $\sign(\cdot)$ is $0$ everywhere except at the origin.
Furthermore, 
$\sign(\cdot)$ is not differentiable at the origin.
Define $\sign(0)=0$. Then, $\sign( w^\top x) - w^\top x =0$ when $w^{\top} x =0$,
which allows us to avoid the issue of the non-differentiability.
Specifically, we can write the gradient as
$\nabla \ell^{\h}(w;x) = -\left(  \sign( w^\top x) - w^\top x \right) x$.
Using the gradient expression,
we obtain the dynamic of GD with hard labels under square loss,
\begin{equation} \label{GD:hard:square}
w_{t+1} = w_t -\eta  \nabla \ell^{\h}(w_t;x_t) =  w_t + \eta
\left(  \sign( w_t^\top x_t) - w_t^\top x_t \right) x_t.
\end{equation}
What we show in the following proposition is that the update $w_{t}$ of \myeqref{GD:hard:square} does not converge to the class mean $\mu$ in direction.
However, it should be noted that a perfect classifier (i.e., one that has the zero $0$-$1$ loss) 
does not necessarily need to align with the class mean $\mu$ depending on the setup.

\begin{proposition} \label{prop:1}
GD with hard labels using square loss fails to converge to an $\epsilon$-optimal predictor 
for any arbitrarily small $\epsilon >0$
even under the noiseless setting of Gaussian model $(\sigma=0)$. More precisely, we have 
$\cos\left(\frac{w_{t}}{\|w_{t}\|}, \frac{\mu}{\| \mu\|} \right) \leq 1 -\bar{\epsilon}$, for some $\bar{\epsilon} >0$ as $t \rightarrow \infty$ if $w_{\infty}$ exists.
\end{proposition}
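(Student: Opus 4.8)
The plan is to exploit a structural degeneracy of this dynamic in the noiseless regime: the GD update in \myeqref{GD:hard:square} never moves $w_t$ off the line spanned by $\mu$. Concretely, set $\sigma=0$, so that each drawn sample is $x_t=y_t\mu$ with $y_t\in\{+1,-1\}$; then $w_t^\top x_t=y_t\|\mu\|\,w_t[1]$, and the increment $w_{t+1}-w_t=\eta\big(\sign(w_t^\top x_t)-w_t^\top x_t\big)x_t$ is a scalar multiple of $x_t=y_t\mu$. Hence the component of $w_t$ orthogonal to $\mu$ is frozen: writing $P:=I_d-\frac{\mu\mu^\top}{\|\mu\|^2}$ and noting $Px_t=0$, a one-line induction gives $Pw_t=Pw_1$ for every $t$; in coordinates, $w_t[i]=w_1[i]$ for all $i\neq 1$ and all $t$. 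This invariance is the real content of the argument.

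Next I would dispose of the degenerate case and then read off the conclusion. If $w_1$ is exactly parallel to $\mu$ then it is already $0$-optimal and there is nothing to prove, so we work in the (generic) situation $\rho:=\|Pw_1\|=\big(\sum_{i\neq 1}w_1[i]^2\big)^{1/2}>0$, together with the standing Condition~1 that $w_1[1]>0$. Suppose the limit $w_\infty=\lim_{t\to\infty}w_t$ exists as a finite vector. By the previous paragraph $\|Pw_\infty\|=\rho>0$ while $w_\infty[1]$ is some finite real, so $w_\infty\neq 0$ and, using $\cos(\tfrac{w}{\|w\|},\tfrac{\mu}{\|\mu\|})=w[1]/\|w\|$,
\[
\cos^2\!\left(\frac{w_\infty}{\|w_\infty\|},\frac{\mu}{\|\mu\|}\right)=\frac{w_\infty[1]^2}{w_\infty[1]^2+\rho^2}<1 .
\]
By continuity of $w\mapsto \cos(\tfrac{w}{\|w\|},\tfrac{\mu}{\|\mu\|})$ at $w_\infty\neq 0$, the iterates satisfy $\cos(\tfrac{w_t}{\|w_t\|},\tfrac{\mu}{\|\mu\|})\le 1-\bar\epsilon$ for large $t$, with $\bar\epsilon:=1-\cos(\tfrac{w_\infty}{\|w_\infty\|},\tfrac{\mu}{\|\mu\|})>0$; in particular Condition~2 of \myeqref{eq:dot} fails for every small $\epsilon$, which is the claim.

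To sharpen the statement from conditional to unconditional, I would additionally analyze the scalar recursion governing the only moving coordinate, $a_t:=w_t[1]$. Substituting $x_t=y_t\mu$ into \myeqref{GD:hard:square} and using $y_t^2=1$ collapses it to $a_{t+1}=(1-\eta\|\mu\|^2)a_t+\eta\|\mu\|\,\sign(a_t)$, independent of the labels $y_t$. For a suitable step size (e.g. $0<\eta\le 1/\|\mu\|^2$) and $a_1>0$, the iterates stay positive and converge monotonically to the fixed point $a_\infty=1/\|\mu\|$, so $w_\infty$ does exist and $\cos^2(w_\infty,\mu)=\frac{1}{1+\rho^2\|\mu\|^2}$, giving the explicit gap $\bar\epsilon\ge\tfrac12\big(1-\tfrac{1}{1+\rho^2\|\mu\|^2}\big)=\frac{\rho^2\|\mu\|^2}{2(1+\rho^2\|\mu\|^2)}$. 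The only place that requires care is this last convergence analysis of a piecewise-linear scalar map — one must control the possible $\sign$ switch and restrict to the step-size window $\eta<2/\|\mu\|^2$ — but this is elementary and, crucially, not needed for the negative result itself, which follows purely from the orthogonal-component invariance. Finally I would remark on why $\sigma=0$ is assumed: once $\sigma>0$ the gradient $\big(\sign(w_t^\top x_t)-w_t^\top x_t\big)x_t$ is generically not aligned with $\mu$, so the orthogonal component is perturbed at every step and this clean invariance, hence this particular obstruction, no longer applies.
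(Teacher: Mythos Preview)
Your proposal is correct and follows essentially the same approach as the paper: both arguments hinge on the two observations that in the noiseless setting the update is always a scalar multiple of $\mu$, so the orthogonal component $Pw_t$ is frozen at $Pw_1$, and that the single moving coordinate $a_t=w_t[1]$ obeys the scalar recursion $a_{t+1}=(1-\eta\|\mu\|^2)a_t+\eta\|\mu\|\sign(a_t)$ with finite fixed point $1/\|\mu\|$. The only difference is presentational: you lead with the invariance and note that the conditional statement (``if $w_\infty$ exists'') already follows from it alone, treating the scalar analysis as a sharpening, whereas the paper interleaves the two and splits into the cases $\eta\le 1/\|\mu\|^2$ and $\eta>1/\|\mu\|^2$.
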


\begin{proof}
In this proof, we denote $\bar{a}_{t} := w_{t}[1] = \left \langle w_{t},\frac{ \mu}{ \| \mu \| } \right \rangle$.
From \myeqref{GD:hard:square}, we have 
\begin{equation} \label{11}
\textstyle
\bar{a}_{t+1} = \bar{a}_t + \eta 
\left(  \sign( w_t^\top x_t) - w_t^\top x_t \right)
\left \langle x_t, \frac{\mu}{\| \mu \|} \right \rangle.
\end{equation}
Let us consider the simple noiseless setting of Gaussian model, i.e., $\sigma=0$, as we aim at giving a non-convergence example. Then, we have $x_t = y_t \mu$ and the dynamic \myeqref{11} becomes
\begin{equation} \label{GD:SQ:UP}
\begin{split}
\textstyle
\bar{a}_{t+1} 
=
(1 - \eta \| \mu \|^2 ) \bar{a}_t 
+ \eta \sign( \bar{a}_t \| \mu\|) \| \mu \|,
\end{split}
\end{equation}
where we used $y_{t}^{2} = 1$ and $y_{t} \sign(y_t \cdot) = \sign(\cdot)$ because $y_{t}= \{-1,+1\}$.\\
\noindent
\\
\textbf{Case: $\eta \leq \frac{1}{ \|\mu\|^2}$:}
Given the initial condition $\bar{a}_{1}>0$, we have $\bar{a}_{t}>0,\forall t$ from \myeqref{GD:SQ:UP},
and $\sign(\bar{a}_t \| \mu \|) = 1, \forall t$. Then, we can recursively expand \myeqref{GD:SQ:UP} from time $t+1$ back to time $1$ and obtain
\begin{equation} \label{p1:0}
\begin{split}
\textstyle \bar{a}_{t+1} & \textstyle  = (1-\eta \| \mu\|^2 )^t \bar{a}_1 + \eta \| \mu \|  \sum_{s=0}^{t} ( 1 - \eta \| \mu \|^2)^s.
\end{split}
\end{equation}
From \myeqref{p1:0}, we know that $\bar{a}_{t} \rightarrow \frac{1}{\| \mu \|}$, as $t \rightarrow \infty$, where we used that
$\sum_{s=0}^{\infty} ( 1 - \eta \| \mu \|^2)^s = \frac{1}{\eta \| \mu\|^2} $.
On the other hand, the dynamic of the orthogonal component $i \neq 1 \in [d]$ is
\begin{equation} \label{p1:1}
\begin{split}
w_{t+1}[i] & = w_t[i] 
+ \eta
\left(  \sign( w_t^\top x_t) - w^\top x_t \right) 
x[i]
= w_t[i],
\end{split}
\end{equation}
where in the last equality we used that $x_t = y_{t} \mu$ and $\mu = [ \| \mu\|, 0, \dots, 0]^{\top} \in \R^{d}$ so that $x[i] =0, \forall i \neq 1$.
By \myeqref{p1:0} and \myeqref{p1:1}, we get
$\frac{ \sum_{i\neq 1}^d w_{\infty}[i]^2 }{ w_{\infty}[1]^2 }
= \frac{ \sum_{i\neq 1}^d w_1[i]^2 }{ (1 / \| \mu \|)^2 }. 
$
That is, the ratio converges to a non-zero value,
which implies that
GD with hard labels fails to converge to an $\epsilon$-optimal predictor for any arbitrarily small $\epsilon$,
i.e., $\cos\left(\frac{w_{\infty}}{\|w_{\infty}\|}, \frac{\mu}{\| \mu\|} \right) \leq 1 -\bar{\epsilon}$ for some $\bar{\epsilon} >0$.
\\
\noindent
\textbf{Case: $\eta > \frac{1}{ \|\mu\|^2}$:}
Suppose $\bar{a}_{t} > 0$. Then, 
the condition that $\bar{a}_{{t+1}} \geq \bar{a}_{t}$ is $\frac{1}{\| \mu \|} \geq \bar{a}_{t}$ from \myeqref{GD:SQ:UP},
which means that the projection to $\mu$ is bounded and hence the model $w_{t}$ cannot be an $\epsilon$-optimal classifier for any arbitrarily small $\epsilon$.
On the other hand, if $\bar{a}_t > \frac{1}{\| \mu \|}$, then $\bar{a}_{{t+1}}<\bar{a}_{t}$, 
and $\bar{a}_{{t+1}}$ could even be negative when 
$\bar{a}_t > \frac{1}{\| \mu\| -1/(\eta \| \mu \|)}$.
Moreover, if $\eta > \frac{2}{\|\mu \|^2}$ and 
$|\bar{a}_t| > \frac{\eta \| \mu \|}{ \eta \| \mu\|^2 -2 } = \frac{1}{\| \mu\| -2/(\eta \| \mu \|)}$,
then the magnitude $|\bar{a}_t|$ is increasing and the sign of $\bar{a}_{t}$ is oscillating; more precisely, we will have $|\bar{a}_{{t+1}}| \geq |\bar{a}_{t}|$ and $\sign(\bar{a}_{t+1}) = - \sign(\bar{a}_t)$.
Consequently,
the model $w_{t}$ is not better than the random guessing at every other iteration
(recall \myeqref{27}), which is not desirable for test-time adaptation.

\end{proof}

\begin{figure}[t]
\centering
\footnotesize
     \subfloat[Small step size $\eta=1$ \label{subfig-1:dummy}]{%
       \includegraphics[width=0.45\textwidth]{./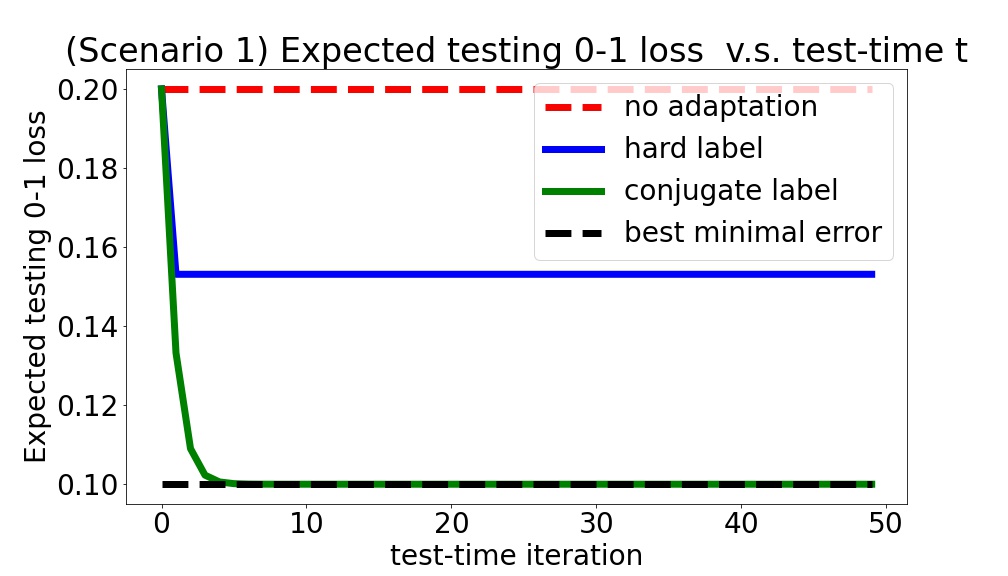}
     } 
     \subfloat[Large step size $\eta=100$ \label{subfig-1:dummy}]{%
       \includegraphics[width=0.45\textwidth]{./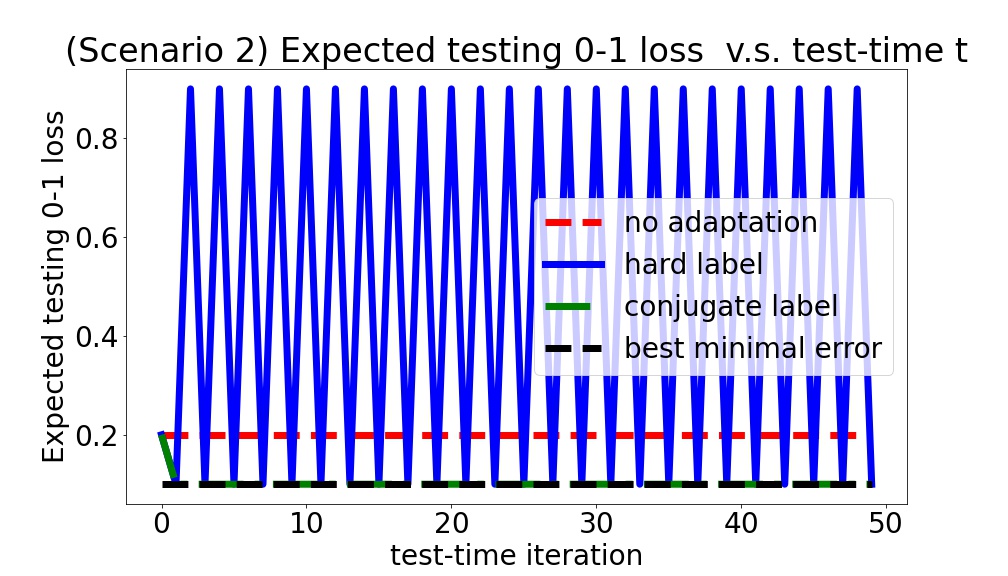}
     } 
     \caption{\footnotesize Expected $\zo$ loss vs. test-time iteration of GD.  
     GD with hard labels under square loss (blue solid line) can not converge to the class mean $\mu$
    in direction,
     while GD with conjugate labels under square loss (green dash dot line) learns an $\epsilon$-optimal predictor. Here, ``no-adaptation'' means simply predicting according to the initial model without any updates. The detailed setup is described in Appendix~\ref{app:figue1}.
     }
     \vspace{-10pt}
     \label{fig:section4}
\end{figure}

In the next section, we will provably show that GD with conjugate labels under square loss can learn an $\epsilon$-optimal predictor for any arbitrary $\epsilon$,
which is the first theoretical result in the literature that shows the advantage of conjugate labels over hard labels, to the best of our knowledge.

\section{Convergence results of GD with pseudo-labels}

Recall that we have $\ell^{{\pred}}(w;x) = \psi(w^\top x)$ for some scalar function $\psi(\cdot): \R \rightarrow \R$ under the scenario of linear predictors. If $\psi(\cdot)$ is an even function, i.e.,
$\psi(u) = \psi(-u) \text{ for all } u \in \R$,
then 
\begin{equation} \label{17}
\ell^{{\pred}}(w;x) = \psi( w^\top x ) = \psi\left( y w^\top (\mu + \sigma \xi)  \right)
= \psi\left( w^\top (\mu + \sigma \xi) \right),
\end{equation}
where the second equality uses $x=y(\mu + \sigma \xi)$ under Gaussian model, and the last equality uses the assumption that $\psi(\cdot)$ is an even function. 
We emphasize that the underlying algorithm itself does not have the knowledge of $\mu$, $\sigma$, or $\xi$, and the last expression simply arises from our analysis.

From \myeqref{17}, we know that the gradient is 
\begin{equation}
\nabla \ell^{{\pred}}(w;x) = \nabla  \psi( w^\top x ) 
= \psi'\left( w^\top (\mu + \sigma \xi) \right) (\mu + \sigma \xi).
\end{equation}
Hence, the dynamic of GD with pseudo-labels is
\begin{equation} \label{eq:dyn_w0}
w_{t+1} = w_t - \eta \nabla \ell^{{\pred}}(w_t;x_t)  =  w_t - \eta 
\psi'\left( w_t^\top (\mu + \sigma \xi) \right) (\mu + \sigma \xi).
\end{equation}

Now let us analyze the population dynamics,
which means that we observe infinitely many unlabeled samples, so we can take expectation on the r.h.s. of \myeqref{eq:dyn_w0}.
We get
\begin{align} 
w_{t+1} & = 
w_t - \eta 
\E_{\xi } \left[ \psi'\left( w_t^\top (\mu + \sigma \xi) \right)  \right] \mu
- \eta 
\E_{\xi  } \left[ \psi'\left( w_t^\top (\mu + \sigma \xi) \right)   \sigma \xi \right]  \label{hi}
\\ & =
w_t - \eta 
\E_{\xi } \left[ \psi'\left( w_t^\top (\mu + \sigma \xi) \right)  \right] \mu
- \eta \sigma^2
\E_{\xi  } \left[ \psi''\left( w_t^\top (\mu + \sigma \xi) \right)  \right] w_t \notag 
\\ & =
\left( 1  
- \eta \sigma^2
\E_{\xi  } \left[ \psi''\left( w_t^\top (\mu + \sigma \xi) \right)  \right]
\right)
 w_t
-
\eta \E_{\xi  } \left[ \psi'\left( w_t^\top (\mu + \sigma \xi) \right)  \right] \mu, \label{eq:dyn_w1}
\end{align}
where the second to last equality uses Stein's identity \citep{stein1981estimation}: for any function $\psi \colon \R^d \to \R$ and $\xi \sim \N(0,I_d)$, it holds that $\E_{\xi}[\xi \psi(\xi)] 
= \E_{\xi}[\nabla_{\xi} \psi(\xi)]$.

Denote $a_{t}:= \left \langle w_{t}, \mu \right \rangle$ 
the dynamic of the component of $w_{t}$ along $\mu$.
Given the dynamic \myeqref{eq:dyn_w1}, it is clear that the component along $\mu$ evolves as:
\begin{equation} \label{comp:along}
a_{t+1} = \left( 1  
- \eta \sigma^2
\E_{\xi } \left[ \psi''\left( w_t^\top (\mu + \sigma \xi) \right)  \right]
\right)
 a_t
-
\eta \E_{\xi  } \left[ \psi'\left( w_t^\top (\mu + \sigma \xi) \right)  \right] \|\mu\|^2.
\end{equation} 
On the other hand, 
denote 
$b_t:=\| [w_{t}[2], \dots, w_t[d] ]^{\top} \|$
the size of the component orthogonal to $\mu$. Then,
its population dynamic evolves as:
\begin{equation} \label{comp:ortho}
b_{t+1} =
\left| 1  
- \eta \sigma^2
\E_{\xi  } \left[ \psi''\left( w_t^\top (\mu + \sigma \xi) \right)  \right]
\right| b_t.
\end{equation}
We further define the ratio $r_{t}:= \frac{a_t}{b_t}$. By \myeqref{comp:along} and \myeqref{comp:ortho},
we have
\begin{equation} \label{dyn:ratio}
r_{t+1} =
\sign \left( 1  
- \eta \sigma^2
\E_{\xi } \left[ \psi''\left( w_t^\top (\mu + \sigma \xi) \right)  \right]
\right)
 r_t + \frac{ \eta \E_{\xi} \left[ -\psi'\left( w_t^\top (\mu + \sigma \xi) \right)  \right] \|\mu\|^2    }{ \left| 1  
- \eta \sigma^2
\E_{\xi  } \left[ \psi''\left( w_t^\top (\mu + \sigma \xi) \right)  \right]
\right| b_t  }.
\end{equation}
It turns out that $\cos\left(\frac{w_t}{\|w_t\|}, \frac{\mu}{\| \mu\|} \right)$ is an increasing function of $r_{t}$, Indeed,
\begin{equation} \label{lem1:eq2}
\cos\left(\frac{w_t}{\|w_t\|}, \frac{\mu}{\| \mu\|} \right) 
= \frac{  \langle w_t, \mu \rangle   }{ \| w_t \| \| \mu \| }
= \frac{  \langle w_t, \mu \rangle   }{ b_t  \sqrt{ \| \mu \|^2 + \langle w_t,\mu \rangle^2 / b_t^2 } }
=  \sign( r_t )\frac{1}{ \sqrt{1+ \| \mu \|^2 /r_t^2} },
\end{equation}
where we used $\| w_t\| = \sqrt{ (w_t^\top \mu / \| \mu \|)^2 + b_t^2  }$.
A successful recovery ($\cos \to 1$) means that we would like $r_t \to \infty$.

In the rest of this paper, we will use the notations 
$\Diamond + \heartsuit$ or
GD$\, + \, \Diamond + \heartsuit$, where $\Diamond = \{ \mathrm{\c}, \mathrm{\h} \}$
and $\heartsuit = \{ \sq, \logit, \exp \}$ for brevity.
For example, $\h + \exp$ represents the self-training loss based on hard labels under exponential loss,
i.e., $\ell^{{\h}}(w;x) = \exp( - | w^\top x |)$,
while GD$\,+\, \c+\sq$ stands for
GD with conjugate labels under square loss in test-time adaptation.

\subsection{(Exponential-rate convergence) GD$\,+\,\c+\sq$}

\begin{proposition} \label{prop:2}
(GD$\,+\,\c+\sq$)
The ratio of the projection onto $\mu$ to the size of the orthogonal component
grows as $$ r_{{t+1}} = r_1 \left(1 + \frac{ \eta \|\mu\|^2  }{ 1 + \eta \sigma^2 } \right)^t.$$
Furthermore, GD learns an $\epsilon$-optimal predictor after
$ t \geq \frac{1}{2} \frac{ \log ( \| \mu \|^2 / (\epsilon r_1^2 ) ) }{ \log ( 1 + \eta \| \mu \|^2 / (1 + \eta \sigma^2) )  }$ iterations.
\end{proposition}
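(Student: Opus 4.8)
The plan is to specialize the general population dynamics of \eqref{comp:along}--\eqref{dyn:ratio} to the self-training loss for $\c+\sq$. Recall from \eqref{p:sq} that $\ell^{\c}(w;x) = -\frac{1}{2}(w^\top x)^2$, so here $\psi(u) = -\frac{1}{2}u^2$, which is even as required by \eqref{17}; hence $\psi'(u) = -u$ and $\psi''(u) = -1$. Writing $w_t^\top(\mu + \sigma\xi) = a_t + \sigma w_t^\top\xi$ with $a_t := \langle w_t,\mu\rangle$ and using $\E_\xi[\xi] = 0$, the two expectations appearing in \eqref{comp:along}--\eqref{comp:ortho} collapse exactly (no higher moments of $\xi$ enter, since $\psi$ is quadratic) to $\E_\xi[\psi'(w_t^\top(\mu+\sigma\xi))] = -a_t$ and $\E_\xi[\psi''(w_t^\top(\mu+\sigma\xi))] = -1$.

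Substituting these into \eqref{comp:along} and \eqref{comp:ortho} gives the decoupled scalar recursions
\[
a_{t+1} = (1 + \eta\sigma^2 + \eta\|\mu\|^2)\,a_t, \qquad b_{t+1} = (1 + \eta\sigma^2)\,b_t,
\]
where the absolute value in \eqref{comp:ortho} is harmless because $1+\eta\sigma^2 > 0$. Dividing, and using \eqref{dyn:ratio} (whose sign factor is $+1$ for the same reason), $r_{t+1} = \frac{1+\eta\sigma^2+\eta\|\mu\|^2}{1+\eta\sigma^2}\,r_t = \bigl(1 + \frac{\eta\|\mu\|^2}{1+\eta\sigma^2}\bigr) r_t$, and unrolling from $t=1$ yields the claimed closed form $r_{t+1} = r_1\bigl(1 + \eta\|\mu\|^2/(1+\eta\sigma^2)\bigr)^t$. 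I would also record that Condition~1 of \eqref{eq:dot} is preserved along the whole trajectory: by assumption $a_1 = \langle w_1,\mu\rangle > 0$, and the multiplier $1+\eta\sigma^2+\eta\|\mu\|^2$ is positive, so $a_t > 0$ for all $t$; if $b_1 = 0$ then $w_1$ already aligns with $\mu$ and is $0$-optimal, so we may assume $b_1 > 0$, whence $b_t > 0$ and $r_t > 0$ for all $t$.

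Finally I convert the growth of $r_t$ into the iteration complexity for Condition~2. By \eqref{lem1:eq2}, $\cos^2\!\bigl(\tfrac{w_t}{\|w_t\|},\tfrac{\mu}{\|\mu\|}\bigr) = \bigl(1 + \|\mu\|^2/r_t^2\bigr)^{-1}$, which is at least $1-\epsilon$ exactly when $r_t^2 \geq \frac{(1-\epsilon)\|\mu\|^2}{\epsilon}$; since $1-\epsilon \leq 1$ it suffices to guarantee $r_t^2 \geq \|\mu\|^2/\epsilon$. Plugging in $r_t = r_1\rho^{t-1}$ with $\rho = 1 + \eta\|\mu\|^2/(1+\eta\sigma^2)$, taking logarithms and solving for the iteration index yields the stated bound. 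There is no real obstacle here — because $\psi$ is quadratic the population identities are exact rather than approximate — so the only things needing care are (i) that the orthogonal multiplier $1+\eta\sigma^2$ is strictly positive, which is precisely why GD neither oscillates nor requires an upper bound on $\eta$, in contrast to Proposition~\ref{prop:1}, and (ii) the sign/edge-case bookkeeping that keeps $\cos(\cdot) = \sign(r_t)/\sqrt{1+\|\mu\|^2/r_t^2} > 0$. It is worth noting in passing that $\ell^{\c}(w;x) = -\frac{1}{2}(w^\top x)^2$ is concave in $w$, so this is a genuinely non-convex instance on which plain GD provably succeeds, as flagged in the introduction.
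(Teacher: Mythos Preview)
Your proposal is correct and follows essentially the same route as the paper: identify $\psi(u)=-\tfrac12 u^2$, compute $\E_\xi[\psi']=-a_t$ and $\E_\xi[\psi'']=-1$, plug into the population dynamics to get the geometric growth of $r_t$, and then invert \eqref{lem1:eq2} to read off the iteration count. The only cosmetic difference is that you derive $a_{t+1}$ and $b_{t+1}$ separately and divide, whereas the paper substitutes directly into the ratio recursion \eqref{dyn:ratio}; your extra bookkeeping on the sign of $1+\eta\sigma^2$ and the $b_1=0$ edge case is a welcome addition.
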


\begin{proof}
For GD$\,+\,\c+\sq$, the self-training loss is 
$\ell^{{\conj}}(w;x) = - \frac{1}{2} (w^\top x)^2$ from \myeqref{p:sq}.
Hence, $\psi(\cdot)= -\frac{1}{2}(\cdot)^{2}$ in \myeqref{17}; moreover, $\psi'(\cdot) = - (\cdot)$ and $\psi''(\cdot) = -1$ in \myeqref{dyn:ratio}.
Therefore, we have
$\E_{\xi} \left[ -\psi'\left( w_t^\top (\mu + \sigma \xi) \right)  \right]
= \E_{\xi} \left[ w_t^\top (\mu + \sigma \xi) \right] = w_t^\top \mu$ since $\E_{{\xi}}[ w_t^\top \xi] = 0$,
and we also have
$\E_{\xi  } \left[ \psi''\left( w_t^\top (\mu + \sigma \xi) \right) \right] =
\E_{\xi  } \left[ -1 \right] = -1$
in \myeqref{dyn:ratio}.

Consequently, the dynamic of the ratio is
\begin{equation} \label{lem1:eq1}
r_{t+1} = r_t +   \frac{ \eta w_t^\top \mu \|\mu\|^2    }{ ( 1  
+ \eta \sigma^2 ) b_t  } = r_t \left(1 + \frac{ \eta \|\mu\|^2  }{ 1 + \eta \sigma^2 } \right)
= r_1 \left(1 + \frac{ \eta \|\mu\|^2  }{ 1 + \eta \sigma^2 } \right)^t.
\end{equation}
From \myeqref{lem1:eq2} and \myeqref{lem1:eq1}, the cosine between $w_{t}$ and $\mu$ is positive and increasing, given the initial condition $a_{1}>0$ (or equivalently, $r_{1}>0$).
Hence, Condition~1 on \myeqref{eq:dot} holds for all $t$. By using \myeqref{lem1:eq2}, we see that to get an $\epsilon$-optimal predictor at test time $t$, we need to satisfy
$\| \mu \|^2 / \left(  r_1^2 \left( 1 +  \frac{\eta \| \mu \|^2}{ 1 + \eta \sigma^2 }    \right)^{2t} \right) \leq \epsilon $. Simple calculation shows that $t \geq \frac{1}{2} \frac{ \log ( \|\mu \|^2 / (\epsilon r_1^2 ) ) }{ \log ( 1 + \eta \| \mu \|^2 / (1 + \eta \sigma^2) )  }$.

\end{proof}

Proposition~\ref{prop:1} and \ref{prop:2} together provably show a performance gap between 
GD$\,+\,\c+\sq$ and GD$\,+\,\h+\sq$. Using conjugate labels, GD converges to the class mean $\mu$ in direction exponentially fast, while GD with hard labels fails in this task. 

\subsection{$\log(t)$-rate convergence of GD}

\begin{figure}[t]
\centering
\footnotesize
     \subfloat[Hard+Exp. \label{subfig-1:dummy}]{%
       \includegraphics[width=0.24\textwidth]{./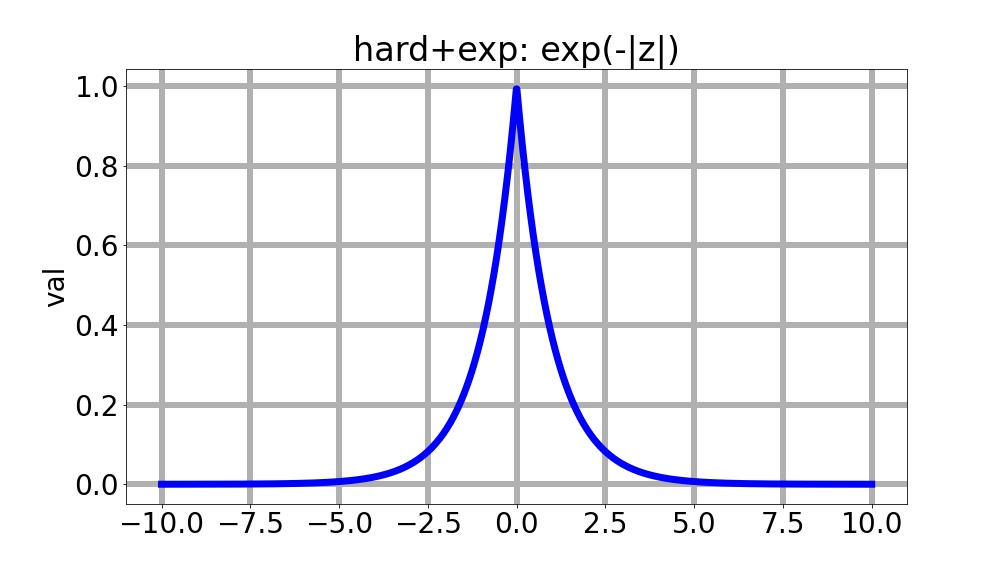}
     } 
     \subfloat[Conj+Exp. \label{subfig-1:dummy}]{%
       \includegraphics[width=0.24\textwidth]{./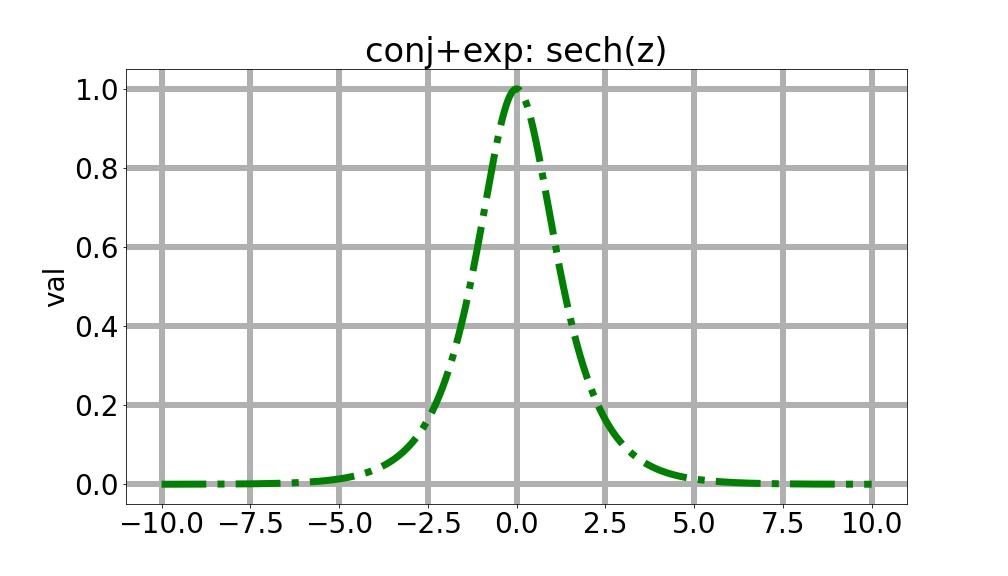}
     } 
     \subfloat[Hard+Logistic. \label{subfig-1:dummy}]{%
       \includegraphics[width=0.24\textwidth]{./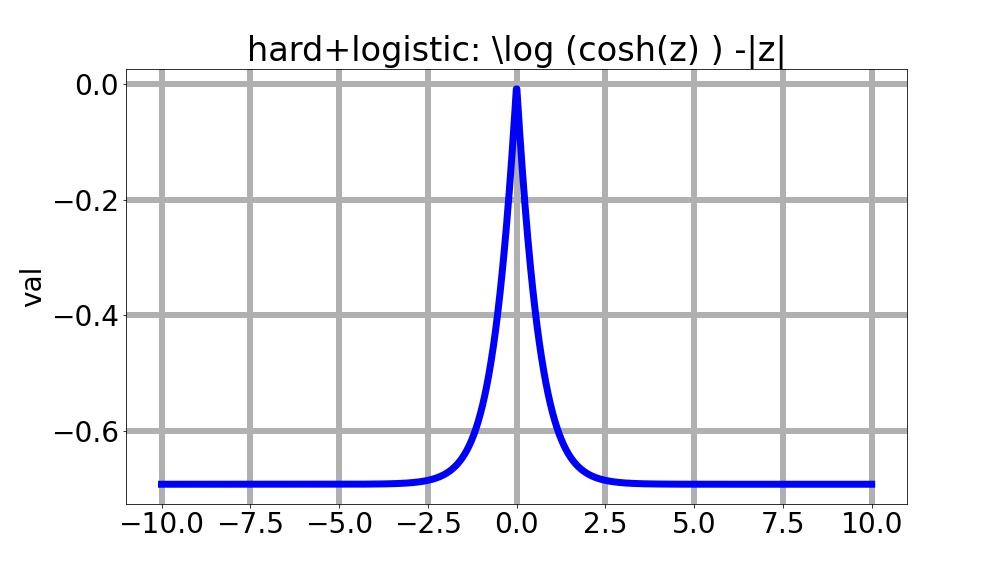}
     } 
     \subfloat[Conj+Logistic. \label{subfig-1:dummy}]{%
       \includegraphics[width=0.24\textwidth]{./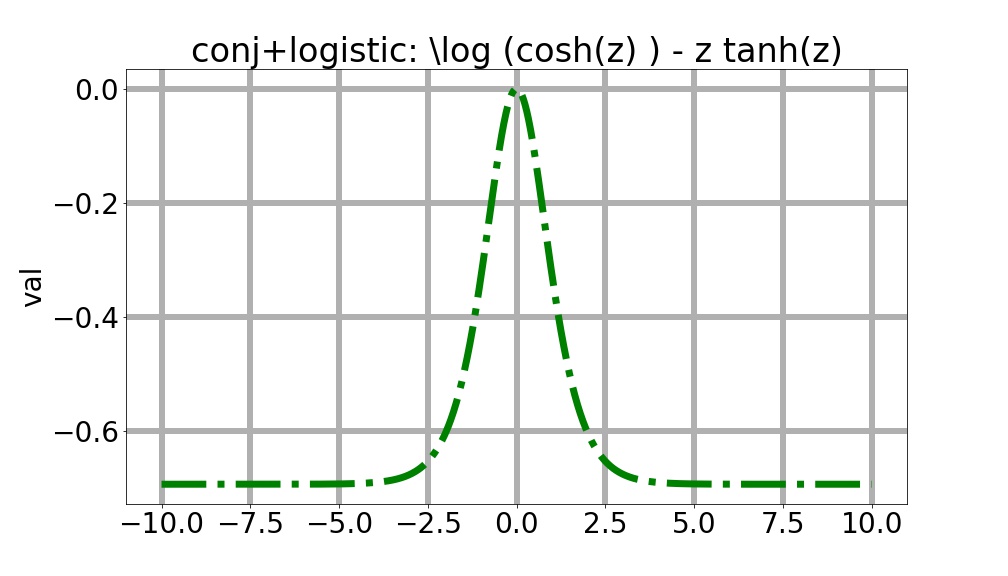}
     } 
     \caption{\footnotesize Plots of some self-training loss functions that satisfy the set of properties $\clubsuit$.
     }
     \vspace{-10pt}
     \label{plot:functions}
\end{figure}

In this subsection, we consider self-training loss functions, $\ell^{{\pred}}(w;x)= \psi(w^\top x)$,
that satisfy the following set of properties $\clubsuit$ with parameter $(L,a_{\min})$:
(i) Even: $\psi(-a) = \psi(a)$ for all $a \in \R$. 
(ii)   There exists $0 < L < \infty$ such that $-\psi'(a) \ge e^{-La}$ for all $a \ge a_{\min}$.

\begin{lemma} \label{lem:property}
The following self-training loss functions 
$\ell^{{\pred}}(w;x)= \psi(w^\top x)$
satisfy $\clubsuit$. More precisely, we have:
\begin{enumerate}
\item $\h+\exp$:
$\psi(u) =\exp(-|u|)$
satisfies $\clubsuit$ with 
$\left(L=1,a_{\min}=0 \right)$.
\item $\h+\logit$:
$\psi(u) = \log \left(  \cosh\left( u \right) \right) - |u|$
satisfies $\clubsuit$ with  
$\left(L=2,a_{\min}=0  \right)$.
\item $\c+\exp$:  
$\psi(u) = \sech(u)$
satisfies $\clubsuit$ with 
$\left(L=1,a_{\min}=0.75\right)$.
\item $\c+\logit$:
$\psi(u) =\log \left(  \cosh\left( u \right) \right) - \tanh\left( u  \right) u$
satisfies $\clubsuit$ with 
$\left(L=2,a_{\min}=0.5\right)$.
\end{enumerate}
\end{lemma}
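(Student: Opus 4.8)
The plan is to handle the two clauses of $\clubsuit$ separately for each of the four functions $\psi$. Clause (i), evenness, is immediate in every case: $\exp(-|u|)$, $\log(\cosh u)-|u|$, $\sech(u)$, and $\log(\cosh u)-\tanh(u)\,u$ are all visibly invariant under $u\mapsto -u$, since $|{\cdot}|$, $\cosh$, and $\sech$ are even and $\tanh(u)\,u$ is a product of two odd functions. So the content is clause (ii): compute $\psi'$ on the positive half-line, observe $-\psi'(a)>0$ there, and verify $-\psi'(a)\ge e^{-La}$ for $a\ge a_{\min}$ with the stated constants. My uniform strategy will be to clear denominators and substitute $x:=e^{2a}$, which turns each inequality into an elementary polynomial inequality in $x$ (or into a one-variable monotonicity check).

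For the two hard-label cases I expect $a_{\min}=0$ with essentially trivial arguments. For $\h+\exp$, $\psi'(a)=-e^{-a}$ for $a\ge 0$, so $-\psi'(a)=e^{-a}=e^{-La}$ with $L=1$ and the bound holds with equality. For $\h+\logit$, $\psi'(a)=\tanh(a)-1$ for $a\ge 0$, and using the identity $1-\tanh(a)=\tfrac{2}{e^{2a}+1}$ the required inequality $\tfrac{2}{e^{2a}+1}\ge e^{-2a}$ rearranges to $2e^{2a}\ge e^{2a}+1$, i.e.\ $e^{2a}\ge 1$, which holds for all $a\ge 0$; hence $L=2$, $a_{\min}=0$.

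The conjugate-label cases carry the real work, and a strictly positive $a_{\min}$ is forced because $-\psi'(0)=0<1$ for both. For $\c+\exp$, $\psi'(a)=-\sech(a)\tanh(a)$, so I must show $\sech(a)\tanh(a)\ge e^{-a}$; multiplying through by $e^{a}\cosh^2(a)>0$ this becomes $e^{a}\sinh(a)\ge\cosh^2(a)$, and writing $x=e^{2a}$ it reduces to $x-x^{-1}\ge 4$, equivalently $x^2-4x-1\ge 0$, i.e.\ $e^{2a}\ge 2+\sqrt5$, i.e.\ $a\ge\tfrac12\log(2+\sqrt5)\approx 0.7218$; since $0.75$ exceeds this threshold, $a_{\min}=0.75$ is valid (if not tight). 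For $\c+\logit$, a short computation in which the $\tanh(a)$ terms cancel gives $\psi'(a)=-a\,\sech^2(a)$, so I need $a\,\sech^2(a)\ge e^{-2a}$; clearing denominators this is $4a\,e^{2a}\ge(e^{a}+e^{-a})^2=e^{2a}+2+e^{-2a}$, i.e.\ $g(a):=(4a-1)e^{2a}-2-e^{-2a}\ge 0$. I would check $g(0.5)=e-2-e^{-1}>0$ directly, then note $g'(a)=(8a+2)e^{2a}+2e^{-2a}>0$ for $a\ge 0$, so $g$ is increasing on $[0.5,\infty)$ and hence nonnegative there; thus $L=2$, $a_{\min}=0.5$.

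The only mild obstacle is the bookkeeping in the conjugate cases: getting the derivatives right (in particular the cancellation yielding $\psi'(a)=-a\,\sech^2(a)$ for $\c+\logit$) and confirming that the round constants $0.75$ and $0.5$ genuinely sit above the exact thresholds $\tfrac12\log(2+\sqrt5)$ and the relevant root of $g$. Everything else is one-line algebra after the substitution $x=e^{2a}$ or a sign check on a derivative.
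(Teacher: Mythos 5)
Your proposal is correct and follows essentially the same route as the paper's proof: verify evenness directly, compute $\psi'$ in each case (including the cancellation giving $-\psi'(u)=u\,\sech^2(u)$ for $\c+\logit$), and reduce the tail bound $-\psi'(a)\ge e^{-La}$ to the same elementary exponential inequalities with the same constants $(L,a_{\min})$. Your extra touches — the exact threshold $\tfrac12\log(2+\sqrt5)$ for $\c+\exp$ and the explicit monotonicity check of $g$ for $\c+\logit$ — only make precise what the paper asserts when it states the inequalities hold for $u\ge 0.75$ and $u\ge 0.5$.
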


The proof of Lemma~\ref{lem:property} is available in Appendix~\ref{app:lem:property}.
Figure~\ref{plot:functions} plots the self-training losses listed in Lemma~\ref{lem:property}. From the figure, one might find that Property $\clubsuit$ is evident for 
these self-training losses.

We will also need the following supporting lemma to get a convergence rate.
\begin{lemma} \label{lem:supp}
Consider the dynamic:
$r_{t+1} \ge r_t + c e^{-L r_t},$
for some $L > 0$ and $c \ge 0$.
Suppose that initially $r_1 > 0$. Then,
$\textstyle    r_{t-\tau_*} \ge \frac{1}{2L} \log c(t-1),$ for all $t > \tau_{*}$,
where $\tau_{*} = 0$ if $\nu \leq e^{L \nu}, \forall \nu \geq 0 $; otherwise,
$\tau_{*} = \nu_*^2(L) / c$, where $\nu_*(L)$ is the unique fixed point of $\nu_* = e^{L \nu_*}$ if it exits.
\end{lemma}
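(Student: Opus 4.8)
My plan is to reduce the recursion to a scalar inequality of the form $r_t e^{L r_t}\ge c(t-1)$ and then invert it. First note we may assume $c>0$: when $c=0$ the right-hand side is $\frac{1}{2L}\log 0 = -\infty$ and there is nothing to prove. Since $c\ge 0$ and $e^{-L r_t}>0$, the hypothesis $r_{t+1}\ge r_t + c e^{-L r_t}$ makes $\{r_t\}$ nondecreasing, so $r_t\ge r_1>0$ for all $t$. Telescoping and using monotonicity ($r_s\le r_t$ for $s\le t$), I would obtain
\[
r_t - r_1 \;=\; \sum_{s=1}^{t-1}\bigl(r_{s+1}-r_s\bigr)\;\ge\; c\sum_{s=1}^{t-1} e^{-L r_s}\;\ge\; c(t-1)\,e^{-L r_t},
\]
and then, multiplying through by $e^{L r_t}$ and discarding the nonnegative term $r_1 e^{L r_t}$, the key bound $r_t e^{L r_t}\ge c(t-1)$ for every $t\ge 1$.

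Second, I would invert this bound. The map $f(x)=x e^{L x}$ is strictly increasing on $[0,\infty)$, so $r_t e^{L r_t}\ge c(t-1)$ gives $r_t\ge f^{-1}\bigl(c(t-1)\bigr)$. A one-line computation gives $f\bigl(\tfrac{1}{2L}\log m\bigr)=\tfrac{\sqrt m\,\log m}{2L}$, so the inequality $f^{-1}(m)\ge \tfrac{1}{2L}\log m$ is equivalent to $m\ge \tfrac{\sqrt m\,\log m}{2L}$, i.e.\ to $\nu\le e^{L\nu}$ with $\nu:=\sqrt m=\sqrt{c(t-1)}$. Hence, in the case where $\nu\le e^{L\nu}$ holds for \emph{all} $\nu\ge 0$ — which, one checks, is exactly the regime $L\ge 1/e$, and which is precisely the hypothesis under which the statement sets $\tau_*=0$ — the desired bound $r_t\ge\frac{1}{2L}\log\bigl(c(t-1)\bigr)$ follows for every $t$, finishing that case.

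Third, in the remaining case some $\nu$ satisfies $\nu> e^{L\nu}$. Here $g(\nu):=e^{L\nu}-\nu$ is convex, positive at $0$ and at $\infty$, and negative somewhere, so it is negative exactly on an interval $(\nu_1,\nu_2)$ whose endpoints are the two fixed points of $\nu=e^{L\nu}$; I take $\nu_*=\nu_2$, the larger one, so $\tau_*=\nu_2^2/c$. The plan is a short burn-in argument: while $r_t<\nu_2$ we have $e^{-L r_t}>e^{-L\nu_2}=1/\nu_2$ (using $e^{L\nu_2}=\nu_2$), hence $r_{t+1}-r_t> c/\nu_2$; so $r_t$ escapes the band $[r_1,\nu_2]$ after at most $\nu_2(\nu_2-r_1)/c\le \nu_2^2/c=\tau_*$ steps, and by monotonicity $r_t\ge\nu_2$ for all $t>\tau_*$. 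For such $t$ I split on the size of $\sqrt{c(t-1)}$: if $\sqrt{c(t-1)}\le\nu_2$ then $r_t\ge\nu_2=\frac{1}{2L}\log(\nu_2^2)\ge\frac{1}{2L}\log\bigl(c(t-1)\bigr)$; if $\sqrt{c(t-1)}>\nu_2$ then $\nu:=\sqrt{c(t-1)}$ sits above both roots, so $\nu\le e^{L\nu}$ and the inversion of the second step applies verbatim. Either way $r_t\ge\frac{1}{2L}\log\bigl(c(t-1)\bigr)$ for $t>\tau_*$, which is the claim after the index relabelling by $\tau_*$.

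The only delicate point, and the step I expect to be the main obstacle, is the burn-in: one must use the \emph{larger} fixed point $\nu_2$ (being $\ge\nu_1$ would not clear the bad band), extract the linear-in-$t$ growth inside the band with the sharp rate $c/\nu_2$, and deal with the off-by-one at the boundary $t\approx\tau_*$ (since $\nu_2^2/c$ need not be an integer, one really works with its ceiling). Everything else is the elementary telescoping above together with the convexity/monotonicity of $x\mapsto x e^{Lx}$.
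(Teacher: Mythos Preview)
Your proposal is correct and follows essentially the same skeleton as the paper: first obtain the key scalar inequality $r_t\,e^{Lr_t}\ge c(t-1)$, then invert it via a case split governed by the comparison $\nu$ versus $e^{L\nu}$. The paper gets the key inequality by multiplying the recursion through by $e^{Lr_{t+1}}$ to obtain the one-step bound $e^{Lr_{t+1}}r_{t+1}\ge e^{Lr_t}r_t+c$ and then telescoping; your telescoping-then-bounding route lands in the same place.

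The one substantive difference is the burn-in. The paper argues indirectly: in the regime $r_t>e^{Lr_t}$ the key inequality gives $r_t^2\ge c(t-1)$, and since that regime forces $r_t\le\nu_*$, the time index itself is capped by $\nu_*^2/c$. You instead run a direct escape argument, using $e^{-Lr_t}>e^{-L\nu_2}=1/\nu_2$ to get a per-step increment of at least $c/\nu_2$ inside the band. Your version is arguably cleaner --- it makes explicit that one must take the \emph{larger} fixed point (the paper speaks of ``the unique fixed point,'' which is imprecise when $L<1/e$), and it handles the post-escape split on $\sqrt{c(t-1)}\lessgtr\nu_2$ crisply. The paper's version, on the other hand, reuses the already-derived bound and avoids a separate increment computation. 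Both yield the same $\tau_*=\nu_*^2/c$, and your remarks about the ceiling/off-by-one are appropriate caveats that the paper glosses over.
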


\begin{proof}

From the dynamic, it is clear that $r_{t+1} \ge r_t$ since $c \ge 0$. 
Then, 
\begin{align}
    e^{L r_{t+1}} r_{t+1} & \ge e^{L r_{t+1}} r_t + c e^{L (r_{t+1} - r_t)} 
    \ge e^{L r_t} r_t + c 
    \ge e^{L r_0} r_0 + ct \ge ct, \label{a1}
\end{align}
where the last step follows from unrolling the recursion $t$ times.

We first analyze the case that $r_{t} \leq e^{L r_t}$.
Since $r_{t} \leq e^{Lr_t}$,
we have $e^{2L r_t} \ge c(t-1)$ from \myeqref{a1}. Hence, $r_t \ge \frac{1}{2L} \log c(t-1).$

Now let us switch to the case that $r_{t} \geq e^{L r_t}$.
Let $\nu_*(L)$ the unique point of $\nu_{*}$ such that
$\nu_* = e^{L \nu_*}$.
If $r_t \le \nu_*(L)$, then $r_t \ge e^{L r_t}$. Hence, we have $r_t^2 \ge r_t e^{L r_t} \overset{\myeqref{a1}}{ \ge} c(t-1)$.
    So $r_t \ge \sqrt{c(t-1)}$.
    Note this possibility cannot happen more than $\tau_* := \nu_*^2(L) / c$ times, since we need $r_t \le r_*$ to stay in this regime.
    So eventually we get out of this regime after a constant number $\tau_{*}$ iterations.
\end{proof}

Now we are ready to state another main result in this paper. 
Proposition~\ref{thm1} below shows a $\log (t)$-convergence rate of GD with pseudo-labels 
in the noiseless setting $\sigma^{2}= 0$
if the underlying self-training loss function satisfies
$\clubsuit$.
The gap between
the exponential rate of GD with conjugate labels using square loss shown in Proposition~\ref{prop:2}
and the logarithmic rate in Proposition~\ref{thm1} 
suggests that the performance of GD in test-time adaptation also crucially depends on
the choice of loss functions, in addition to the choice of pseudo-labels.
\begin{proposition} \label{thm1}
(Noiseless setting) Apply GD to minimizing $\ell^{{\pred}}(w;x)= \psi(w^\top x)$,
where $\psi(\cdot)$ satisfies
$\clubsuit$. 
If the initial point satisfies $a_{1} > a_{{\min}}$,
then the ratio of $w_t's$ component along $\mu$ to the size of its orthogonal component to $\mu$
at test time $t$, i.e., $r_t$ in \myeqref{dyn:ratio},
satisfies $$  r_{t-\tau_*} = \Omega\left(\frac{1}{L b_1} \log\left( \frac{\eta \|\mu\|^2}{ b_1 } t \right) \right), \text{ for all  } t> \tau_{*},$$
where $\tau_{*}$ is a constant defined in Lemma~\ref{lem:supp}.
\end{proposition}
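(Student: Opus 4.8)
The plan is to reduce the vector dynamics \myeqref{comp:along}--\myeqref{dyn:ratio} to a one-dimensional recursion in $r_t$ and then apply Lemma~\ref{lem:supp}. I would first specialize to the noiseless regime $\sigma^2=0$. In that case $w_t^\top(\mu+\sigma\xi)=w_t^\top\mu=a_t$ is deterministic, so the $\xi$-expectations trivialize: \myeqref{comp:ortho} collapses to $b_{t+1}=b_t$, hence $b_t=b_1$ for all $t$, and \myeqref{comp:along}, \myeqref{dyn:ratio} become the scalar updates $a_{t+1}=a_t-\eta\psi'(a_t)\|\mu\|^2$ and $r_{t+1}=r_t+\frac{\eta\|\mu\|^2}{b_1}(-\psi'(a_t))$ with $a_t=b_1 r_t$. (If $b_1=0$ the initial predictor is already exactly aligned with $\mu$ and the claim is vacuous, so assume $b_1>0$.)

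Next I would establish the invariant $a_t>a_{\min}$ for every $t\ge 1$. Property~$\clubsuit$(ii) gives $-\psi'(a)\ge e^{-La}>0$ for all $a\ge a_{\min}$, so $\psi'(a)<0$ there; combined with $a_{t+1}=a_t-\eta\psi'(a_t)\|\mu\|^2$ this shows $a_{t+1}>a_t$ whenever $a_t\ge a_{\min}$. Since the hypothesis gives $a_1>a_{\min}$, a short induction yields $a_t>a_{\min}$ for all $t$, and hence $-\psi'(a_t)\ge e^{-La_t}=e^{-Lb_1 r_t}$ throughout.

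Substituting this lower bound into the $r_t$-recursion gives $r_{t+1}\ge r_t+c\,e^{-\widetilde L r_t}$ with $\widetilde L:=Lb_1>0$ and $c:=\eta\|\mu\|^2/b_1\ge 0$, and $r_1=a_1/b_1>a_{\min}/b_1\ge 0$ because $a_{\min}\ge 0$ for each loss in Lemma~\ref{lem:property}. This is exactly the setting of Lemma~\ref{lem:supp}, which I would invoke with these $\widetilde L$ and $c$ to get $r_{t-\tau_*}\ge \frac{1}{2Lb_1}\log\!\bigl(\tfrac{\eta\|\mu\|^2}{b_1}(t-1)\bigr)$ for all $t>\tau_*$, where $\tau_*$ is the constant of that lemma at parameters $\widetilde L,c$. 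Absorbing the factor $\tfrac12$ and the shift $t-1\mapsto t$ into $\Omega(\cdot)$ yields the stated bound $r_{t-\tau_*}=\Omega\!\bigl(\tfrac{1}{Lb_1}\log(\tfrac{\eta\|\mu\|^2}{b_1}t)\bigr)$.

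I expect the one genuinely delicate point to be the invariant $a_t>a_{\min}$: away from it the exponential lower bound on $-\psi'$ is no longer available, and for several losses in Lemma~\ref{lem:property} (e.g.\ $\c+\exp$ and $\c+\logit$) $-\psi'$ is not even nonnegative for arguments below $a_{\min}$, so the recursion for $r_t$ could stall; this is why the hypothesis $a_1>a_{\min}$ on the initial point is essential. The rest is routine: the noiseless reduction is immediate, and the logarithmic growth rate is entirely delegated to Lemma~\ref{lem:supp}.
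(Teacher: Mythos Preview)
Your proposal is correct and follows essentially the same route as the paper: specialize \myeqref{comp:along}--\myeqref{comp:ortho} to $\sigma^2=0$ so that $b_t\equiv b_1$, use property~$\clubsuit$(ii) together with the monotonicity of $a_t$ to maintain $a_t\ge a_{\min}$, substitute $-\psi'(a_t)\ge e^{-La_t}=e^{-Lb_1 r_t}$ into the $r_t$-recursion, and invoke Lemma~\ref{lem:supp} with parameters $(\widetilde L,c)=(Lb_1,\eta\|\mu\|^2/b_1)$. One small inaccuracy in your closing commentary: for the losses $\c+\exp$ and $\c+\logit$, $-\psi'(u)$ is in fact nonnegative on all of $[0,a_{\min})$ (it only turns negative for $u<0$), so the real issue there is the loss of the exponential lower bound, not a sign change; this does not affect the proof itself.
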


\begin{proof} 
From \myeqref{hi} or \myeqref{comp:ortho}, we know that the size of the orthogonal component does not change throughout the iterations when $\sigma^{2}=0$, i.e., $b_{t+1}=b_{t},\forall t$.
On the other hand, the component along $\mu$ in the noiseless setting has the dynamic,
\begin{equation} \label{aaa}
\textstyle
a_{t+1} \overset{\myeqref{comp:along}}{=}  a_t
+ \eta \left( -\psi'\left( a_t \right) \right)\|\mu\|^2
\geq a_t + \eta e^{-L a_t} \| \mu \|^2, \forall a_t \geq a_{\min},
\end{equation} 
where we recall $a_{t}:=\langle w_{t}, \mu \rangle$ and the inequality uses the property regarding $-\psi'(\cdot)$ as stated in $\clubsuit$.
It is noted that \myeqref{aaa} implies that $a_{t}$ is non-decreasing, and hence the condition about the initial point, i.e., $a_{1} \geq a_{\min}$, guarantees $a_{t} \geq a_{{\min}}$ for all test time $t$. 

By using the above results, we deduce that the dynamic of the ratio $r_{t}:= \frac{a_t}{b_t}$ satisfies
%
$ r_{t+1} \geq r_t + \frac{ \eta e^{-L a_t} \| \mu \|^2  }{b_1}
= r_t + \frac{ \eta e^{-L r_t b_1} \| \mu \|^2  }{b_1},$
where we used that $b_{t+1}=b_{t}= b_{1},\forall t$.
Invoking Lemma~\ref{lem:supp}
leads to the result.

\end{proof}

\begin{figure}[t]
\centering
\footnotesize
       \includegraphics[width=0.45\textwidth]{./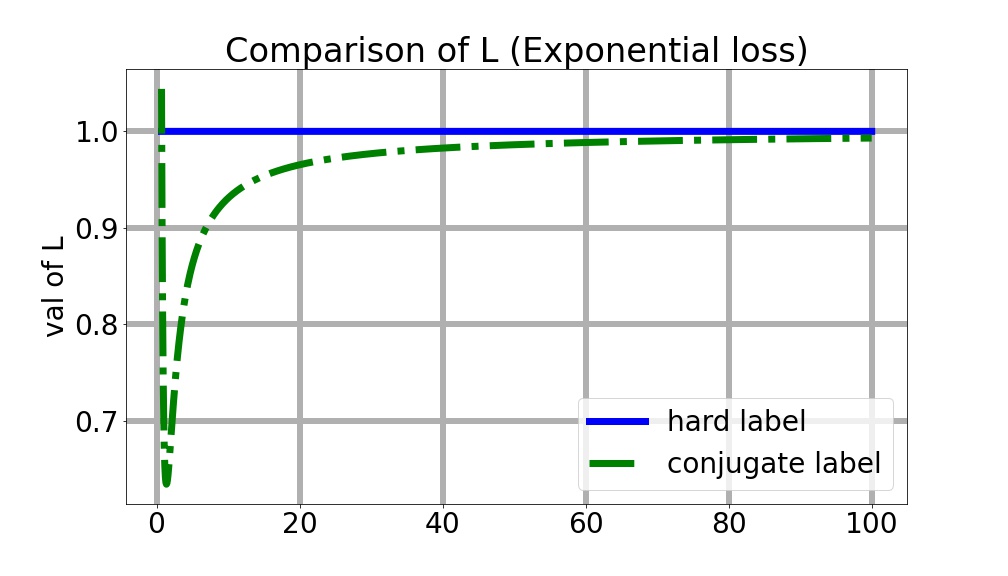}
       \includegraphics[width=0.45\textwidth]{./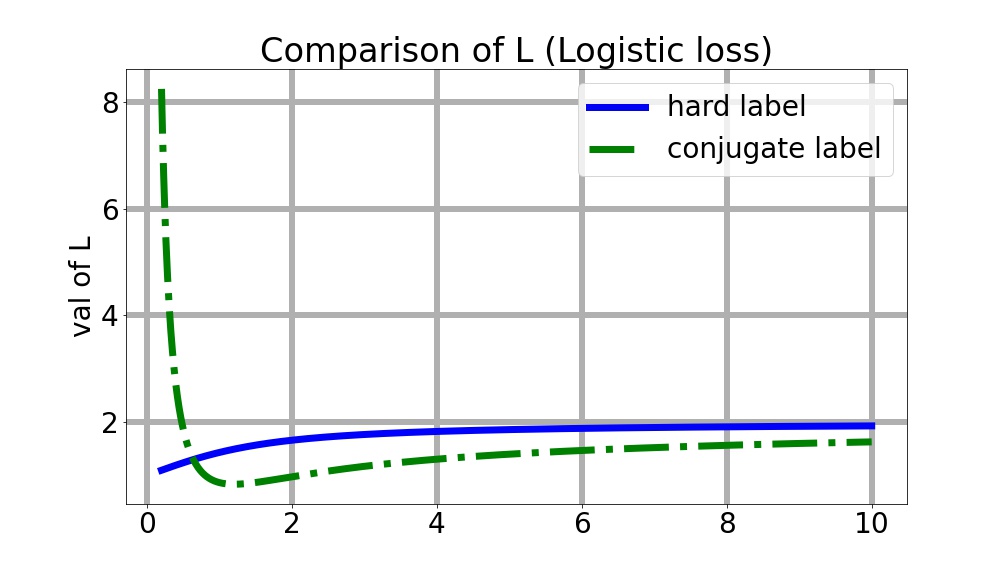}
     \caption{\footnotesize
We plot $L(z):= \frac{ \log \left( - \psi'(z)  \right)  }{z}$ vs. $z$,
where $\psi'(\cdot)$ is the first derivative of the underlying self-training loss. Left: $L(z)$ vs. $z$ of $\hard+\exp$ and $\conj+\exp$. Right: $L(z)$ vs. $z$ of $\hard + \logit$ and $\conj + \logit$.  
     }
     \vspace{-10pt}
     \label{fig:diff}
\end{figure}

Proposition~\ref{thm1} implies that GD for minimizing a self-training loss with a smaller constant $L$ can result in a faster growth of the ratio $r$ and consequently a faster convergence rate. 
Recall the definition of $L$ in Property $\clubsuit$: a smaller constant $L$ means that the (minus) derivative $-\psi'(\cdot)$ of the self-training loss has a heavier tail.
We therefore compare the tails of the self-training loss functions by plotting $L(z):= \frac{ \log \left( - \psi'(z)  \right)  }{z}$ of each on Figure~\ref{fig:diff}, which shows that there exists a threshold $z_{{\min}}$ such that for all $z \geq z_{{\min}}$, the number
$L(z)$ that corresponds to the loss function with the conjugate label is smaller than that of the hard label.
This implies that the self-training loss derived from conjugate labels can have a smaller constant $L$ (for a \emph{finite} $z$) compared to that of hard labels, which in turn might hint at a faster convergence of GD~+$\,\conj$ compared to GD +$\,\hard$ for exponential loss and logistic loss.
Figure~\ref{fig:exp} shows the experimental results under Gaussian model, where 
GD uses a received mini-batch of samples to conduct the update at each test time.
The detailed setup is available in Appendix~\ref{app:figue1}. We find that GD with conjugate labels dominates GD with hard labels empirically, which is aligned with our theoretical result. It is noted that for the case of exponential loss, \citet{GSRK22} report a similar experimental result under Gaussian model --- GD +$\,\c+\exp$ outperforms GD +$\,\h+\exp$.

\begin{figure}[t]
\centering
\footnotesize
       \includegraphics[width=0.45\textwidth]{./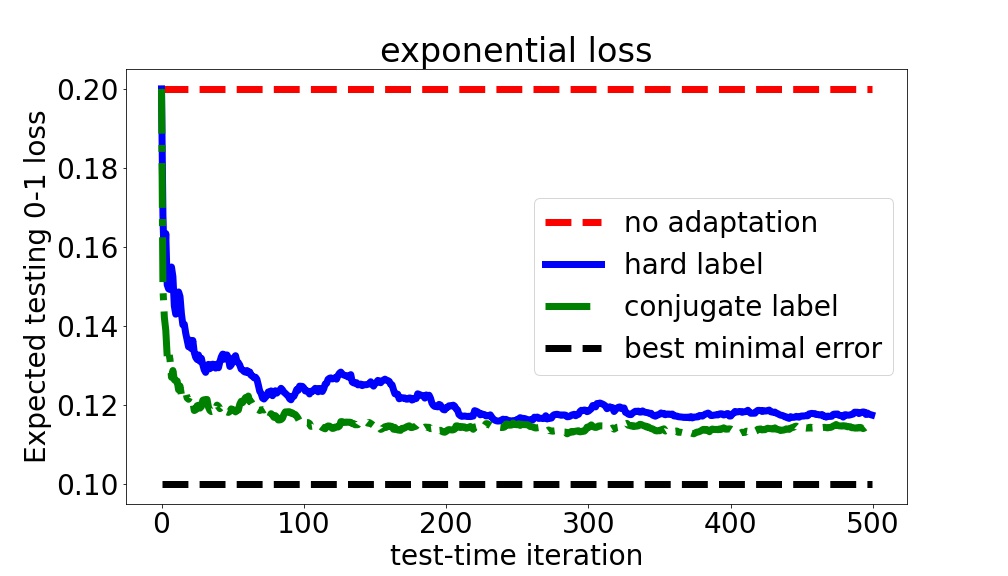}
       \includegraphics[width=0.45\textwidth]{./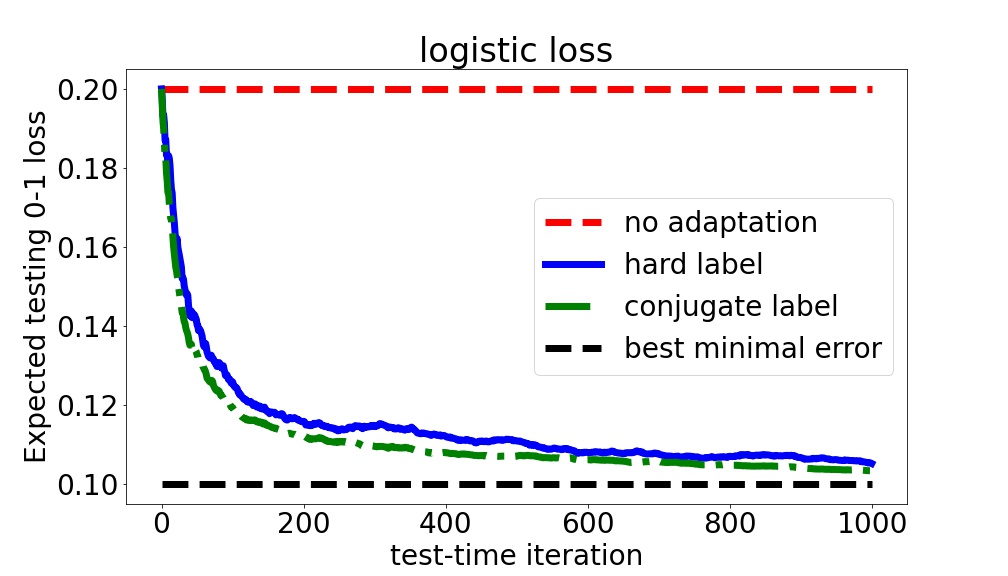}
     \caption{\footnotesize
          Expected $0$-$1$ loss $\Phi\left( \frac{\mu^\top w_t }{  \sigma \| w_t\|  }  \right)$ vs. test-time $t$. Left: GD +$\,\h+\exp$ and GD +$\,\c +\exp$. Right: GD +$\,\h +\logit$ and GD +$\,\c +\logit$. Here ``best minimal error'' is
          $\Phi\left( \frac{ \| \mu \|}{  \sigma }  \right)$ (recall the discussion in Section~\ref{sec3}).  
          Both figures show that GD with conjugate labels outperforms GD with hard labels.
     }
     \vspace{-10pt}
     \label{fig:exp}
\end{figure}

\section{Limitations and outlooks}

In this paper, we analyze GD with hard and conjugate pseudo-labels for test-time adaptation under  different loss functions.
We study the performance of each of them under a binary classification framework, identify a scenario when GD with hard labels cannot converge to an $\epsilon$-optimal predictor for any small $\epsilon$ while GD with conjugate labels does, and obtain some convergence results of GD with pseudo-labels.
However, there are still many directions worth exploring. 
First of all, while our current analysis in the binary classification setting might be viewed as a first step towards systematically studying GD with pseudo-labels, 
analyzing GD with pseudo-labels in multi-class classification is left open in this work and could be a potential direction.
Second, while analyzing the population dynamics has already given us some insights about GD with pseudo labels, it might be useful to study their finite-sample dynamics.
Third, theoretically understanding GD with other pseudo-labels or combined with other domain adaptation techniques like ensembling (e.g., \citet{wortsman2022model}) or others (e.g., \cite{li2019learning,schneider2020improving,eastwood2021source}) might be promising. 
Finally, analyzing momentum methods (e.g., \citet{N13,wibisono2016variational,wang2018acceleration,wang2022provable,WAL21,WLA21}) with pseudo-labels is another interesting direction, and one of the open questions is whether they enjoy provable guarantees of faster test-time adaptation compared to GD.
Overall, we believe that the connection between optimization, domain adaptation, and machine learning under distribution shifts can be strengthened.

\subsubsection*{Acknowledgments}

The authors appreciate Shikhar Jaiswal spotting a minor error in our previous version of the proof of Proposition~1, which has been corrected in this version.
The authors thank the constructive feedback from the reviewers and comments from Sachin Goyal, which helps improve the quality of this paper. The authors also thank Chi-Heng Lin for valuable discussions.


\bibliography{iclr2023_pseudo}
\bibliographystyle{iclr2023_conference}

\appendix

\section{Derivations of conjugate labels and the associated self-training losses on Table~\ref{table1}} \label{app:dev}

\textbf{1.~(Square loss):}
Square loss $\ell(w;(x,y)):= \frac{1}{2}( y - w^\top x)^2$ is in the form of \myeqref{def:ori},
where $f(\cdot) = \frac{1}{2} (\cdot)^{2} : \R \rightarrow \R^{+}$. 
Substituting $f(\cdot)= \frac{1}{2} (\cdot)^{2}$ and $h(w) = w^{\top} x$ into \myeqref{def:yconj} and \myeqref{def:lconj}, we get
\begin{equation} 
\yconj_w(x) = w^\top x, \quad \text{ and }  \quad
\ell^{{\conj}}(w;x) = - \frac{1}{2} (w^\top x)^2.
\end{equation}
On the other hand, let $y \leftarrow \sign(w^\top x)$. we have
\begin{equation} 
\yhard_w(x) =\sign(w^\top x),  \quad \text{ and } \quad
\ell^{\h}(w;x) = \frac{1}{2} \left( \sign\left( w^\top x \right) - w^\top x \right)^2.
\end{equation}

\noindent\\
\textbf{2.~(Logistic loss):}
Recall
that logistic regression 
predicts $P(\hat{y}=1) = \frac{\exp( w^\top x)}{  1 + \exp( w^\top x) }$
and $P(\hat{y}=0) = 1 - P(\hat{y}=1)$, and the loss function is:
\begin{equation} \label{eq:logit}
\begin{split}
\ell^{\mathrm{logit}}(w;(x,\hat{y})) &:= - \left( \hat{y} \log ( P(\hat{y}=1) )  + (1-\hat{y}) \log \left( P(y=0) \right) \right)
\\ &
:= \log \left(  1 + \exp( w^\top x ) \right) - \hat{y} ( w^\top x),
\end{split}
\end{equation}
where $\hat{y} = \{0,1\}$.
Let $y = 2 \hat{y} - 1 \in \{-1,1\}$. Then,
substituting $\hat{y}= \frac{1}{2}+\frac{y}{2}$ back into \myeqref{eq:logit}
and using the equation $\cosh(z) = \frac{\exp(z)+\exp(-z)}{2}$ for any $z \in \R$, we obtain an equivalent objective:
\begin{equation} 
\begin{split}
\ell^{\mathrm{logit}}(w;(x,y)) 
& = \log  (1+\exp(w^\top x)) - \hat{y} (w^\top x) 
\\ & = \log (1+\exp(w^\top x)) - \left(  \frac{1}{2}+\frac{y}{2} \right) (w^\top x)
\\ & = \log \left(  \exp\left( \frac{w^\top x}{ 2}  \right) + \exp \left( 
- \frac{w^\top x}{ 2} 
  \right) \right) - y \frac{w^\top x}{2}
\\ & =  \log \left(  \cosh\left( \frac{w^\top x}{2}  \right) \right) - y \frac{w^\top x}{2}
+ \log 2.
\end{split}
\end{equation}
Now by renaming $\frac{w}{2} \leftarrow w$, we get
\begin{equation} \label{133}
\begin{split}
\ell^{\mathrm{logit}}(w;(x,y)) 
=  \log \left(  \cosh\left( w^\top x  \right) \right) - y w^\top x + C,
\end{split}
\end{equation}
where the last term is a constant and can be dropped without affecting the training.

Observe that \myeqref{133} is in the form of \myeqref{def:ori}, where $f(\cdot) = \log \left(  \cosh\left( \cdot \right) \right)$ and $h_{w}(x)=w^{\top} x$.
Using  \myeqref{def:yconj} and \myeqref{def:lconj}, we get
\begin{equation} \label{p:logit}
\yconj_w(x) = \tanh\left( w^\top x \right), \quad \text{ and }  \quad
\ell^{\c}(w;x) = \log \left(  \cosh\left( w^\top x  \right) \right) - \tanh\left( w^\top x \right) w^\top x.
\end{equation}
On the other hand, let $y \leftarrow \sign(w^\top x)$ in \myeqref{133}. we have
\begin{equation} \label{h:logit}
\yhard_w(x) =\sign(w^\top x),  \quad \text{ and } \quad
\ell^{\h}(w;x) = \log \left(  \cosh\left( w^\top x  \right) \right) - | w^\top x|.
\end{equation}

\textbf{3.~(Exponential loss):} Recall that exponential loss is $\lexp(w;(x,y)) := \exp(-y h_w(x) ) = \exp(-yw^\top x)$, where $y = \left\{ +1,-1 \right\}$,
which can be rewritten as
\begin{align} 
\lexp(w;(x,y)) & = \frac{1}{2} \left( \exp( w^\top x ) + \exp( - w^\top x  \right)
-\frac{1}{2} y \left( \exp( w^\top x ) - \exp( - w^\top x)   \right), \notag
\\ & = \cosh(w^\top x) - y \sinh( w^\top x). \label{def:lexp}
\end{align}
The above function is in an \emph{expanded} conjugate form \citep{GSRK22}:
$$f( h_w(x) ) - y g( h_w(x) ),$$ where $f(\cdot) = \cosh( \cdot)$, $g(\cdot) =
 \sinh(\cdot)$, and $h_{w}(x)=w^{\top} x$. 
Let $h_{*} \leftarrow \arg\min_{h}  f( h ) - y g( h )$.
Then, $h_{*}$ satisfies $\nabla f( h_* ) = \nabla g(h_*) y.$
\citet{GSRK22} define the conjugate label
$y_w^{\c}(x)$ via the equality $$\nabla f( h_w(x) ) = \nabla g(h_w(x)) y_w^{\c}(x)$$
for this case. Therefore, we have $\yconj_w(x) 
= \tanh( w^\top x )$. By substituting $y \leftarrow \yconj_w(x)$ in \myeqref{def:lexp},
we get the self-training loss function using the conjugate label: $\ell^{\c}(w) = \sech( w^\top x)$. To conclude, we have:
\begin{equation} \label{p:exp}
\yconj_w(x) 
= \tanh( w^\top x ),
 \quad \text{ and } \quad
\ell^{\c}(w;x) = \sech( w^\top x).
\end{equation}

On the other hand, let $y \leftarrow \sign(w^\top x)$ in 
$\lexp(w;(x,y)) := \exp(-y h_w(x) )$, we have
\begin{equation} \label{h:exp}
\yhard_w(x) =\sign(w^\top x),  \quad \text{ and } \quad
\ell^{\h}(w;x) = \exp(-| w^\top x| ). 
\end{equation}

\section{Setup of the simulation in Figure~\ref{fig:section4} and Figure~\ref{fig:exp}} \label{app:figue1}

Below we describe how to reproduce Figure~\ref{fig:section4} and Figure~\ref{fig:exp}.
We first specify the mean and covariance 
$\mu_{\s}$, $\mu_{\t}$, $\Sigma_{\s} = \sigma_{{\s}} I_{d}$, $\Sigma_{\t} = \sigma_{{\t}} I_{d}$
as follows, where the subscript $\s$ stands for the source domain,
and the subscript $\t$ is the target domain.

We set $\mu_{\s} = e_1$ and then set set $\mu_{\t}[1] = 0.6567$, and the remaining elements of $\mu_{{\t}}$ is set randomly from a normal distribution and were normalized to ensure that $\mu_T$ is a unit norm vector. 
Then, we set $\sigma_{\t} = 0.6567 / 0.8416$.
This way we have
$\frac{ \mu_{\t}^\top \mu_{\s} }{\sigma_{\t} \| \mu_{\s} \| } = 0.8416 $ so that $\Phi\left( \frac{ \mu_{\t}^\top \mu_{\s} }{\sigma_{\t} \| \mu_{\s} \| } \right) = \Phi(0.8416)=0.2 $, i.e., the initial model $w_{1} = w_{{\s}}$ has $20\%$ expected $\zo$ loss in the new domain $\t$.
Also, the best minimal error in the new domain $\t$ is $\Phi \left(  \frac{ \| \mu_T \| }{ \sigma_T }  \right) = 
\Phi\left( \frac{1}{ 0.6567 / 0.8416 } \right) = 0.1$.

In the simulation result depicted in Figure~\ref{fig:section4}, a sample of $(x=\mu)$ arrives when the test time $t$ is an odd number and a sample of $(x=-\mu)$ arrives when the test time $t$ is an even number. Note that the algorithms do not know the labels. 

In the simulation result depicted in Figure~\ref{fig:exp},
we consider the setting of noisy data, i.e., $x_{t} \in \R^{d}$ is sampled as $x_{t} \sim \N(\mu_{\t}, \sigma_{\t}^2 I_d)$ instead of $x_{t} = y \mu_{{\t}}$. 
We search the step size $\eta$ over the grid 
$\{10^{-3},5 \times 10^{-3},10^{-2}, 5\times 10^{-2}, 10^{-1}, 5\times 10^{-1} , 10^0 , 5 \times 10^0, 10^1, 5 \times 10^1, 10^{2}\}$ for each GD$\,+\,\h+\exp$, GD$\,+\,\c+\exp$, GD$\,+\,\h+\logit$, or GD$\,+\,\c+\logit$, and report the best result of each one.

\section{Proof of Lemma~\ref{lem:property}} \label{app:lem:property}


\noindent
\textbf{Lemma~\ref{lem:property}:}
\textit{
The following self-training loss functions 
$\ell^{{\pred}}(w;x)= \psi(w^\top x)$
satisfy the set of properties $\clubsuit$. More precisely, we have
\begin{enumerate}
\item $\h+\exp$:
$\psi(u) =\exp(-|u|)$
satisfies $\clubsuit$ with 
$\left(L=1,a_{\min}=0 \right)$.
\item $\h+\logit$:
$\psi(u) = \log \left(  \cosh\left( u \right) \right) - |u|$
satisfies $\clubsuit$ with  
$\left(L=2,a_{\min}=0  \right)$.
\item $\c+\exp$:  
$\psi(u) = \sech(u)$
satisfies $\clubsuit$ with 
$\left(L=1,a_{\min}=0.75\right)$.
\item $\c+\logit$:
$\psi(u) =\log \left(  \cosh\left( u \right) \right) - \tanh\left( u  \right) u$
satisfies $\clubsuit$ with 
$\left(L=2,a_{\min}=0.5\right)$.
\end{enumerate}
}

\begin{proof}
\noindent
\begin{itemize}
\item
For $\h+\exp$, we have $\psi(u)=\exp(-|u|)$, $\psi'(u) = -\sign(u) \exp(-|u|)$, and $\psi''(u) = \exp(-|u|) + \delta_{0}(u)$.

It is evident that $\psi(u)=\exp(-|u|)$ is an even function and that it is differentiable everywhere except at the origin.
We also have $|-\psi'(u) | \leq 1$
and $-\psi'(u) \geq \exp(-u)$ for all $u \geq 0$.
We conclude that $\psi(u)=\exp(-|u|)$ satisfies $\clubsuit$ with parameter
$\left(L=1,a_{\min}=0\right)$.

\item
For $\h+\logit$, we have $\psi(u)=\log \left(  \cosh\left( u \right) \right) - 
|u|$, $\psi'(u) = \tanh(u) -\sign(u) $, and $\psi''(u) = \sech^{2}(u) - \delta_{0}(u)$.

It is evident that $\psi(u)=\log \left(  \cosh\left( u \right) \right) - 
|u|$ is an even function and that it is differentiable everywhere except at the origin.
We also have $|-\psi'(u) | \leq 1$.
Furthermore, 
$$ \tanh(u) - 1 = \frac{\exp(u) - \exp(-u)}{ \exp(u)+\exp(-u)} -1 = - \frac{2 \exp(-u)}{ \exp(u)+\exp(-u)}. $$
Hence, for $u>0$,
$-\phi'(u) = 1-\tanh(u) = \frac{2 \exp(-u)}{ \exp(u)+\exp(-u)} \geq \exp(-2u)$,
since
$$
\frac{2 \exp(-u)}{ \exp(u)+\exp(-u)} \geq \exp(-2u)
\iff 2 \exp(-u) \geq \exp(-u) + \exp(-3u),
$$
and the later is evident for $u\geq 0$.

We conclude that $\psi(u)=\log \left(  \cosh\left( u \right) \right) - 
|u|$ satisfies $\clubsuit$ with parameter
$\left(L=2,a_{\min}=0 \right)$.

\item For $\c+\exp$, we have $\psi(u)=\sech(u)$, $\psi'(u) = -\tanh(u) \sech(u)$, and $\psi''(u) = -\sech(u)^{3} + \tanh^2(u) \sech(u)$.

It is evident that $\psi(u)=\sech(u)$ is an even function and that it is differentiable everywhere.
We also have $|-\psi'(u) | \leq 1$, as $|\tanh(u)|\leq 1$ and $\sech(u) \leq 1$.

Note that $-\psi'(u) = \tanh(u) \sech(u) = \frac{ 2 ( \exp(u) - \exp(-u) )}{ (\exp(u) + \exp(-u))^2 }$. Moreover,
\begin{equation}
\begin{split}
\frac{ 2 ( \exp(u) - \exp(-u) )}{ (\exp(u) + \exp(-u))^2 } \geq \exp(-u)
& \iff
2 ( \exp(2u) - 1) \geq \exp(2u) + 2 + \exp(-2u)
\\ & 
\iff
\exp( 2u ) \geq \exp( -2u ) + 4,
\end{split}
\end{equation}
which holds when $u\geq 0.75$.
That is, $-\psi'(u) \geq \exp(-u)$ for all $u \geq 0.75$.

We conclude that $\psi(u)=\sech(u)$ satisfies $\clubsuit$ with parameter
$\left(L=1,a_{\min}=0.75\right)$.

\item For $\c+\logit$, we have $\psi(u)=\log \left(  \cosh\left( u \right) \right) - \tanh\left( u \right) u$, $\psi'(u) = -\sech^2(u) u$, and $\psi''(u) = -\sech(u)^{2} + 2 u\tanh(u) \sech^2(u)$.

It is evident that $\psi(u)=\log \left(  \cosh\left( u \right) \right) - \tanh\left( u \right) u$ is an even function and that it is differentiable everywhere.
We also have $\left|-\psi'(u) \right|= \left| \frac{4u}{ (\exp(u)+\exp(-u))^2} \right| \leq 1$.

Note that $-\psi'(u) =  \sech^2(u) u = \frac{ 4 u}{ (\exp(u) + \exp(-u))^2 }$. Moreover,
\begin{equation}
\begin{split}
\frac{ 4 u}{ (\exp(u) + \exp(-u))^2 } \geq \exp(-2u)
& \iff
4 u \geq 1 + 2 \exp(-2u) + \exp(-4u),
\end{split}
\end{equation}
which holds when $u\geq 0.5$.
That is, $-\psi'(u) \geq \exp(-2u)$ for all $u \geq 0.5$.

We conclude that $\psi(u)=\log \left(  \cosh\left( u \right) \right) - \tanh\left( u \right) u$ satisfies $\clubsuit$ with parameter
$\left(L=2,a_{\min}=0.5\right)$.

\end{itemize}

\end{proof}


\end{document}